\providecommand{\algorithmname}{Algorithm}
\newtheorem{theorem}{Theorem}
\newtheorem{definition}[theorem]{Definition}
\newtheorem{lemma}[theorem]{Lemma}
\newtheorem{corollary}[theorem]{Corollary}
\DeclareMathOperator{\Ex}{\mathrm{E}}
\newcommand{\R}{\mathbb{R}}
\newcommand{\OPT}{\mathrm{OPT}}
\newcommand{\Ev}{\mathcal{E}}
\newcommand\numberthis{\addtocounter{equation}{1}\tag{\theequation}}
\title{Improved Learning-augmented Algorithms for k-means and k-medians Clustering}
\author{Thy Nguyen\footnotemark[1], Anamay Chaturvedi\footnotemark[1], Huy L\^{e} Nguy\~{\^{e}}n\thanks{Equal contribution. All three authors were supported in part by NSF CAREER grant CCF-1750716 and NSF grant CCF-1909314.}}
\date{
\texttt{\{nguyen.thy2,chaturvedi.a,hu.nguyen\} @northeastern.edu}\\[2ex]
Khoury College of Computer Sciences,\\
Northeastern University\\[2ex]
\today
}
\begin{document}
\maketitle
\begin{abstract}
    We consider the problem of clustering in the learning-augmented setting, where we are given a data set in $d$-dimensional Euclidean space, and a label for each data point given by an oracle indicating what subsets of points should be clustered together. This setting captures situations where we have access to some auxiliary information about the data set relevant for our clustering objective, for instance the labels output by a neural network. Following prior work, we assume that there are at most an $\alpha \in (0,c)$ for some $c<1$ fraction of false positives and false negatives in each  predicted cluster, in the absence of which the labels would attain the optimal clustering cost $\mathrm{OPT}$. 
    For a dataset of size $m$, we propose a deterministic $k$-means algorithm that produces centers with improved bound on clustering cost compared to the previous randomized algorithm while preserving the $O( d m \log m)$ runtime. Furthermore, our algorithm works even when the predictions are not very accurate, i.e. our bound holds for $\alpha$ up to $1/2$, an improvement over $\alpha$ being at most $1/7$ in the previous work. For the $k$-medians problem we improve upon prior work by achieving a biquadratic improvement in the dependence of the approximation factor on the accuracy parameter $\alpha$ to get a cost of $(1+O(\alpha))\mathrm{OPT}$, while requiring essentially just $O(md \log^3 m/\alpha)$ runtime.
\end{abstract}

\section{Introduction}

In this paper we study $k$-means and $k$-medians clustering in the learning-augmented setting. In both these problems we are given an input data set $P$ of $m$ points in $d$-dimensional Euclidean space and an associated distance function $\mbox{dist} (\cdot, \cdot)$. The goal is to compute a set $C$ of $k$ points in that same space that minimize the following cost function:
\begin{align*}
    \costP{P}{C} = \sum_{p\in P} \min_{i\in [k]}\mbox{dist} (p,c_i).
\end{align*}
In words, the cost associated with a singular data point is its distance to the closest point in $C$, and the cost of the whole data set is the sum of the costs of its individual points. 

In the $k$-means setting $\mbox{dist}(x,y):=\|x-y\|^2$, i.e., the square of the Euclidean distance, and in the $k$-medians setting we set $\mbox{dist}(x,y):=\|x-y\|$, although here instead of the norm of $x-y$, we can in principle also use any other distance function. These problem are well-studied in the literature of algorithms and machine learning, and are known to be hard to solve exactly \citep{dasgupta2008hardness}, or even approximate well beyond a certain factor \citep{DBLP:conf/focs/Cohen-AddadS19}. Although approximation algorithms are known to exist for this problem and are used widely in practice, the theoretical approximation factors of practical algorithms can be quite large, e.g., the 50-approximation in \citet{song2010fast} and the $O(\ln k)$-approximation in \citet{arthur2006k}. Meanwhile, the algorithms with relatively tight approximation factors do not necessarily scale well in practice \citep{ahmadian2019better}. 

To overcome these computational barriers,   \citet{DBLP:journals/corr/abs-2110-14094} proposed a \emph{learning-augmented} setting where we have access to some auxiliary information about the input data set. This is motivated by the fact that in practice we expect the dataset of interest to have exploitable structures  relevant to the optimal clustering. For instance, a classifier's predictions of  points in a  dataset can help group similar instances together.  This notion was formalized in \citet{DBLP:journals/corr/abs-2110-14094} by assuming that we have access to a predictor in the form of a labelling $P=P_1\cup \dots \cup P_k$ (all the points in $P_i$ have the same label $i\in [k]$), such that there exist an unknown optimal clustering $P = P_1^*\cup \dots \cup P_k^*$, an associated set of centers $C = (c_1^*, \dots, c_k^*)$ that achieve the optimally low clustering cost $\OPT$ ( $\sum_{i\in[k]} \costP{P_i}{\{c_i^*\}} = \OPT$), and a known \emph{label error rate} $\alpha$ such that: \[|P_i\cap P_i^*| \geq (1-\alpha) \max(|P_i|, |P_i^*|)\]

In simpler terms, the auxiliary partitioning $(P_1,\ldots,P_k)$ is close to some optimal clustering: each predicted cluster has at most an $\alpha$-fraction of points from outside its corresponding optimal cluster, and there are at most an $\alpha$-fraction of points in the corresponding optimal cluster not included in predicted cluster. The predictor, in other words, has at most $\alpha$ false positive and false negative rate for each label. 


Observe that even when the predicted clusters $P_i$ are close to a set of true clusters $P_i^*$ in the sense that the label error rate $\alpha$ is very small, computing the means or medians of $P_i$ can lead to arbitrarily bad solutions. It is known that for $k$-means the point that is allocated for an optimal cluster should simply be the average of all points in that cluster (this can be seen by simply differentiating the convex $1$-mean objective and solving for the minimizer). However, a single false positive located far from the cluster can move this allocated point arbitrarily far from the true points in the cluster and  drive the cost up arbitrarily high. This problem requires the clustering algorithms to process the predicted clusters in a way so as to preclude this possibility.

Using tools from the robust statistics literature,  the authors of \citet{DBLP:journals/corr/abs-2110-14094}  proposed a randomized algorithm that achieves a $(1+20\alpha)$-approximation given a label error rate $\alpha < 1/7$ and a guarantee that each predicted cluster has $\Omega\left(\frac{k}{\alpha}\right)$ points. For the $k$-medians problem, the authors of \citet{DBLP:journals/corr/abs-2110-14094} also proposed an algorithm that achieves a $(1+ \alpha')$-approximation if  each predicted cluster contains $\Omega\left(\frac{n}{k}\right)$ points and a label rate $\alpha$ at most
$O\left(\frac{\alpha'^4}{k\log \frac{k}{\alpha'}}\right),$
where the big-Oh notation hides some small unspecified constant, and $\alpha'<1$. 



The restrictions for the label error rate $\alpha$ to be small in  both  of the algorithms of \citet{DBLP:journals/corr/abs-2110-14094} lead us to investigate the following question: 

\textit{Is it possible to design a $k$-means and a $k$-medians algorithm that achieve $(1+\alpha)$-approximate clustering when the predictor is not very accurate? } 
\subsection{Our Contributions}

In this work, we not only give an affirmative answer to the question above for both the $k$-means and the $k$-medians problems, our algorithms also have improved bounds on the clustering cost, while preserving the time complexity of the previous approaches and removing the requirement on a lower bound on the size of each predicted cluster.

For learning-augmented $k$-means, we modify the main subroutine of the previous randomized algorithm to get a deterministic method that works for all $\alpha<1/2$, which is the natural breaking point (as explained below).  In the regime where the $k$-means algorithm of \citet{DBLP:journals/corr/abs-2110-14094} applies, we get improve the approximation factor to $1+7.7\alpha$. For the larger domain $\alpha \in [0,1/2)$, we derive a more general expression as reproduced in \cref{tab:comparison}. Furthermore, our algorithm has better bound on the clustering cost compared to that of the previous approach, while preserving the $O(md \log m)$ runtime and not requiring a lower bound on the size of each predicted cluster. 

Our $k$-medians algorithm improves upon the algorithm in \citet{DBLP:journals/corr/abs-2110-14094} by achieving a $(1+O(\alpha))$-approximation for $\alpha<1/2$, thereby improving  both the range of $\alpha$  as well as the dependence of the approximation factor on the label error rate from bi-quadratic to near-linear. For success probability $1-\delta$, our runtime is $O(\frac{1}{1-2\alpha} md \log^3 \frac{m}{\alpha} \log \frac{k \log (k/\delta)}{(1-2\alpha)\delta} \log \frac{k}{\delta})$, so we see that by setting $\delta=1/\mbox{poly}(k)$, we have just a logarithmic dependence in the run-time on $k$, as opposed to a polynomial dependence.

\textbf{Upper bound on } $\alpha$. Note that if the error label rate $\alpha$ equals $1/2$, then even for three clusters there is no longer a clear relationship between the predicted clusters and the related optimal clusters - for instance given three optimal clusters $P_1^*, P_2^*, P_3^*$ with equally many points, if for all $i\in [3]$, the predicted clusters $P_i$ consist of half the points in $P_i^*$ and half the points in $P_{(i+1) \bmod 3}^*$, then the label error rate $\alpha=1/2$ is achieved, but there is no clear relationship between $P_i^*$ and $P_i$. In other words, it is not clear whether the predicted labels give us any useful information about an optimal clustering. In this sense, $\alpha=1/2$ is in a way a natural stopping point for this problem.

\subsection{Related Work}

This work belongs to a growing literature on learning-augmented algorithms. Machine learning has been used to improve algorithms for a number of classical problems, including data structures \citep{kraska2018case, mitzenmacher2018model, lin2022learning}, online algorithms \citep{purohit2018improving}, graph algorithms \citep{khalil2017learning, chen2022faster, chen2022triangle}, computing frequency estimation \citep{pmlr-v139-du21d} , caching \citep{rohatgi2020near, wei2020better}, and support estimation \citep{eden2021learning}. We refer the reader to \citet{DBLP:journals/corr/abs-2006-09123} for an overview and applications of the framework. 

Another relevant line of work is clustering with side information. The works \citet{balcan2008clustering, awasthi2014local, vikram2016interactive} studied an interactive clustering setting where an oracle interactively provides advice about whether or not to merge  two clusters. \citet{basu2004active} proposed an active learning framework for clustering, where the algorithm has access to a predictor that determines if two points should or should not belong to the same cluster.  \citet{ashtiani2016clustering} introduced a semi-supervised active clustering framework where the algorithm has access to a predictor that answers queries whether two particular points belong in an optimal clustering. The goal is to produce a $(1+\alpha)$-approximate clustering while minimizing the query complexity to the oracle.

Approximation stability, proposed in \citet{balcan2013clustering},  is another assumption proposed to circumvent the NP-hardness of approximation for $k$-means clustering. More formally, the concept of $(c,\alpha)$-stability requires that every $c$-approximate clustering is $\alpha$-close to the optimal solution in terms of the fraction of incorrectly clustered points. This is different from our setting, where at most an $\alpha$ fraction of the points  are incorrectly clustered and can worsen the clustering cost arbitrarily. 

\citet{gamlath2022approximate} studies the problem of $k$-means clustering in the presence of noisy labels, where the cluster label of each point created by either an adversarial or a random
perturbation of the optimal solution. Their Balanced Adversarial Noise Model assumes that the size of the symmetric difference between the predicted cluster $P_i$ and optimal cluster $P_i^*$ is bounded by $\alpha|P_i^*|$. The algorithm uses a subroutine with runtime exponential in $k$ and $d$ for a fixed $\alpha \leq 1/4$. In this work, we have different assumptions on the predicted cluster cluster $P_i$ and the optimal cluster $P_i^*$. Moreover, our focus is on efficient algorithms practical nearly linear-time algorithms that can scale to very large datasets for $k$-means and $k$-medians clustering.

\begin{table}
    \centering
    \begin{tabular}{c|c|c|c}
         Work, Problem & Approx. Factor & Label Error Range & Time Complexity   \\ \hline
         \citet{DBLP:journals/corr/abs-2110-14094}, $k$-Means & $1+ 20\alpha$ & $( \frac{10 \log m}{\sqrt{m}},1/7)$ & $O(md\log m)$ \\
         \Cref{alg:detAlg}, $k$-Means & $1 + \frac{5\alpha-2\alpha^2}{(1-2\alpha)(1-\alpha)}$ & [0,1/2) & $O(md \log m)$ \\
         & $1+7.7 \alpha$ & $[0,1/7)$ &  \\
         \citet{DBLP:journals/corr/abs-2110-14094}, $k$-Medians & $1 + \tilde{O}((k\alpha)^{1/4})$ &  small constant & \begin{tabular}{c}
            $O(md \log^3 m + $\\
            $\mbox{poly}(k,\log m) )$
         \end{tabular}\\
         \Cref{alg:kMeds}, $k$-Medians & $1 + \frac{7+10\alpha - 10\alpha^2}{(1-\alpha)(1-2\alpha)}$ & $[0,1/2)$ & \begin{tabular}{c}
               $\tilde{O}\left(\frac{1}{1-2\alpha} md\right.$ \\
               $\left.\log^3 m \log^2 \frac{k}{\delta}\right)$
         \end{tabular}
    \end{tabular}
    \caption{Comparison of learning-augmented $k$-means and $k$-medians algorithms. We recall that $m$ is the data set size, $d$ is the ambient dimension, $\alpha$ is the label error rate, and $\delta$ is the failure probability (where applicable). The success probability of the $k$-medians algorithm of \citet{DBLP:journals/corr/abs-2110-14094} is $1-\mbox{poly}(1/k)$. The $\tilde{O}$ notation hides some log factors to simplify the expressions.}
    \label{tab:comparison}
\end{table}

\section{\texorpdfstring{$k$}{k}-Means}

\begin{algorithm}
\begin{algorithmic}
\caption{Deterministic Learning-augmented $k$-Means Clustering}
\label{alg:detAlg}
    \Require Data set $P$ of $m$ points, Partition $P = P_1 \cup \dots P_k$ from a predictor, accuracy parameter $\alpha$
    \For{$i\in [k]$}
        \For{$j\in [d]$}
    \State Let $\omega_{i,j}$ be the collection of all subsets of $(1-\alpha)m_i$ contiguous points in $P_{i,j}$.
    \State  $ I_{i,j} \leftarrow \operatorname{argmin}_{Z \in \omega_{i,j}} \cost{Z}{Z} = \operatorname{argmin}_{Z \in \omega_{i,j}} \sum_{z \in Z} z^2 - \frac{1}{|Z|} \left( \sum_{z' in Z} z' \right)^2  $ 
        \EndFor
        \State Let $\widehat{c}_i = (\overline{I_{i,j}})_{j\in [d]}$
    \EndFor
    \State Return $\{\widehat{c}_1,\dots, \widehat{c}_k \}$
\end{algorithmic}
\end{algorithm}


We briefly recall some notation for ease of reference.

\begin{definition}
\label{def:common}
We make the following definitions:
\begin{enumerate}
    \item The given data set is denoted as $P$, and $m:= |P|$. The output of the predictor is a partition $(P_1,\dots P_k)$ of $P$. Further, $m_i := |P_i|$.
    \item There exists an optimal partition $(P_1^*, \dots, P_k^*)$ and centers $(c_1^*, \dots, c_k^*)$ such that $\sum_{i\in [k]} \mathrm{cost}(P_i^*, c_i^*) = \OPT$, the optimally low clustering cost for the data set $P$. Furthermore, $m_i^* := |P_i^*|$. For each cluster $i \in [k]$, denote the set of true positives $P_i^* \cap P_i = Q_i$. Recall that $|Q_i| \geq (1-\alpha) \max(|P_i|, |P_i^*|)$, for some $\alpha<1/2$. 
    \item We denote the average of a set $X$ by $\overline{X}$. For the sets $X_i$ and $P_i$ we denote their projections onto the $j$-th dimension by $X_{i,j}$ and $P_{i,j}$, respectively.
\end{enumerate}
\end{definition}

Before we describe our algorithm, we recall why the naive solution of simply taking the average of each cluster provided by the predictor is insufficient. Consider $P_i$, the set of points labeled $i$ by the predictor. Recall that the optimal $1$-means solution for this set is its mean, $\overline{P_i}$. Since the predictor is not perfect, there might exist a number of points in $P_i$ that are not actually in $P_i^*$. Thus, if the points in $P_i \setminus P_i^*$ are significantly far away from $\overline{P_i^*}$, they will increase the clustering cost arbitrary if we simply use $\overline{P_i}$ as the center. The following well-known identity formalizes  this observation. 

\begin{lemma}[\citet{inaba1994applications}]
\label{lemma:identity}
Consider a set $X \subset \R^d$ of size $n$ and $c \in \R^d$,  
\begin{align*}
    \costP{X}{c} = \min_{c' \in \R^d} \costP{X}{c'} + n \cdot \| c - \overline{X} \|^2 = \cost{X}{X} + n \cdot \| c - \overline{X} \|^2. 
\end{align*}
\end{lemma}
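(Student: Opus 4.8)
The plan is to prove the identity by the standard bias--variance decomposition of the squared Euclidean cost around the mean. First I would write out $\costP{X}{c} = \sum_{x \in X} \|x - c\|^2$ and insert the mean by splitting each summand as $x - c = (x - \overline{X}) + (\overline{X} - c)$. Expanding the square produces three pieces per term: $\|x - \overline{X}\|^2$, a cross term $2\langle x - \overline{X},\, \overline{X} - c\rangle$, and $\|\overline{X} - c\|^2$.

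Summing over $x \in X$, the cross terms collapse: $\sum_{x \in X}(x - \overline{X}) = \left(\sum_{x \in X} x\right) - n\,\overline{X} = 0$ by the definition $\overline{X} = \frac{1}{n}\sum_{x \in X} x$, so after pulling the constant vector $\overline{X} - c$ out of the inner product the whole cross contribution vanishes. The last piece is constant in $x$ and contributes $n\,\|\overline{X} - c\|^2 = n\,\|c - \overline{X}\|^2$. Hence $\costP{X}{c} = \sum_{x \in X}\|x - \overline{X}\|^2 + n\,\|c - \overline{X}\|^2$, which already matches the claimed $\cost{X}{X} + n\,\|c - \overline{X}\|^2$ once we note that $\cost{X}{X} = \costP{X}{\overline{X}} = \sum_{x \in X}\|x - \overline{X}\|^2$; the one-dimensional expansion $\sum_z z^2 - \frac{1}{|Z|}\left(\sum_{z'} z'\right)^2$ appearing in \cref{alg:detAlg} is exactly this quantity written out coordinate by coordinate.

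Finally, to obtain the first equality $\min_{c' \in \R^d}\costP{X}{c'} = \cost{X}{X}$, I would simply apply the decomposition just derived: since $n\,\|c' - \overline{X}\|^2 \ge 0$ for every $c' \in \R^d$, with equality if and only if $c' = \overline{X}$, the map $c' \mapsto \costP{X}{c'}$ attains its minimum (uniquely) at $c' = \overline{X}$, with minimum value $\sum_{x \in X}\|x - \overline{X}\|^2 = \cost{X}{X}$. There is essentially no real obstacle here; the only point requiring a little care is to avoid circularity when identifying the minimizer, which is why I would establish the algebraic decomposition first and read off the minimization as an immediate corollary, rather than invoking it at the outset.
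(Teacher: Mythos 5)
Your proof is correct: the bias--variance expansion with the vanishing cross term $\sum_{x\in X}(x-\overline{X})=0$ is the standard argument for this identity, and reading off the minimizer $c'=\overline{X}$ from the derived decomposition cleanly avoids any circularity. The paper itself offers no proof --- it cites the identity from \citet{inaba1994applications} --- so there is nothing to compare against beyond noting that your argument is exactly the one the citation stands in for, including your correct observation that the coordinatewise formula $\sum_{z}z^2-\frac{1}{|Z|}\left(\sum_{z'}z'\right)^2$ in \cref{alg:detAlg} is the one-dimensional instance of $\sum_{x\in X}\|x-\overline{X}\|^2$.
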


Ideally, we would like to be able to recover the set $Q_i = P_i \cap P^*_i$ and use the average of $Q_i$ as the center. We know that $| Q_i \setminus P_i^* | \leq \alpha m_i^* $. By \cref{lemma:lambdaMean}, it is not hard to show that $\cost{P_i^*}{Q_i} \leq \left(1 + \frac{\alpha}{1-\alpha} \right) \cost{P_i^*} {P_i^*} = \left(1 + \frac{\alpha}{1-\alpha} \right) \costP{P_i^*}{c_i^*}$, which also implies a $\left(1 + \frac{\alpha}{1-\alpha} \right)$- approximation for the problem. 

\begin{restatable}{lemma}{lambdaMean}
\label{lemma:lambdaMean}
    For any partition $J_1\cup J_2$ of a set $J\subset \R$ of size $n$, if $|J_1|\geq (1-\lambda) n$, then $|\overline{J} - \overline{J}_1|^2 \leq \frac{\lambda}{(1-\lambda)n} \cost{J}{J}$.
\end{restatable}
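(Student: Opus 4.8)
The plan is to reduce the statement to the standard parallel-axis (variance) decomposition for a two-part partition. Write $n_1 = |J_1|$ and $n_2 = |J_2| = n - n_1$; by hypothesis $n_1 \geq (1-\lambda)n$, hence $n_2 \leq \lambda n$. If $n_2 = 0$ then $\overline{J} = \overline{J}_1$ and the inequality is trivial, so I may assume $n_1, n_2 \geq 1$ (in particular $n > 0$, and the claimed bound is vacuous if $\lambda \geq 1$).

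First I would record the elementary identity $\overline{J} = \tfrac{n_1}{n}\,\overline{J}_1 + \tfrac{n_2}{n}\,\overline{J}_2$, which follows by summing the defining relations $n\overline{J} = \sum_{z \in J} z$ and $n_i \overline{J}_i = \sum_{z \in J_i} z$. Rearranging gives $\overline{J} - \overline{J}_1 = \tfrac{n_2}{n}(\overline{J}_2 - \overline{J}_1)$, so
\[
  |\overline{J} - \overline{J}_1|^2 = \frac{n_2^2}{n^2}\,|\overline{J}_1 - \overline{J}_2|^2 .
\]
The key step is then the decomposition $\cost{J}{J} = \cost{J_1}{J_1} + \cost{J_2}{J_2} + \tfrac{n_1 n_2}{n}\,|\overline{J}_1 - \overline{J}_2|^2$. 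I would prove it by writing $\cost{J}{J} = \sum_{z \in J}(z - \overline{J})^2$ (which is the $c = \overline{J}$ case of \cref{lemma:identity}, or just the given closed form), splitting the sum over $J_1$ and $J_2$, adding and subtracting $\overline{J}_i$ inside each part; the cross terms vanish because $\sum_{z \in J_i}(z - \overline{J}_i) = 0$, and the leftover pieces $n_i\,|\overline{J}_i - \overline{J}|^2$ combine to $\tfrac{n_1 n_2}{n}\,|\overline{J}_1 - \overline{J}_2|^2$ after substituting $\overline{J}_i - \overline{J} = \pm \tfrac{n_{3-i}}{n}(\overline{J}_1 - \overline{J}_2)$. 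Dropping the two nonnegative cost terms gives $|\overline{J}_1 - \overline{J}_2|^2 \leq \tfrac{n}{n_1 n_2}\,\cost{J}{J}$.

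Combining the two displays yields $|\overline{J} - \overline{J}_1|^2 \leq \tfrac{n_2}{n\, n_1}\,\cost{J}{J}$, and then $n_2 \leq \lambda n$ together with $n_1 \geq (1-\lambda)n$ gives $\tfrac{n_2}{n\, n_1} \leq \tfrac{\lambda}{(1-\lambda)n}$, which is exactly the claim. I do not expect a genuine obstacle here: the only points requiring care are the bookkeeping in the variance decomposition and the degenerate case $n_2 = 0$, both of which are routine.
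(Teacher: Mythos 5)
Your proof is correct and follows essentially the same route as the paper's: both decompose $\mathrm{cost}(J,\overline{J})$ over the two parts via the bias--variance identity, drop the nonnegative within-part costs, and relate $\overline{J}-\overline{J}_1$ to $\overline{J}_2-\overline{J}_1$ through the weighted-average identity before applying the bounds $|J_1|\geq(1-\lambda)n$ and $|J_2|\leq\lambda n$. The only cosmetic differences are that you route the algebra through $|\overline{J}_1-\overline{J}_2|^2$ rather than $|\overline{J}-\overline{J}_2|$ and that you explicitly handle the degenerate case $J_2=\emptyset$, which the paper glosses over.
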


Since we do not have access to $Q_i$, the main technical challenge is to filter out the outlier points in $P_i$ and construct a center close to $\overline{Q_i}$. Minimizing the distance of the center to $\overline{Q_i}$ implies reducing the distance to $c_i^*$ as well as the clustering cost. 

Our algorithm for $k$-means, \cref{alg:detAlg}, iterates over all clusters given by the predictor and finds a set of  contiguous points of size $(1-\alpha) m_i$ with the smallest clustering cost in each dimension. At the high level, our analysis shows that the average of the chosen points, $I_{i,j}$, is not too far away from that of the true positives, $Q_{i,j}$. This also  implies that the additive clustering cost of $\overline{I_{i,j}}$ would not be too large.  Since we can analyze the clustering cost by bounding the cost in every cluster $i$ and dimension $j$, for simplicity we will not refer to a specific $i$ and $j$ when discussing the intuition of the algorithm. The proofs of the following lemmas and theorem are included in the appendix. 

Note that there can be multiple optimal solutions in the optimization step. The algorithm can either be randomized by choosing an arbitrary set, or can also be deterministic by always choosing the first optimal solution. \Cref{lemma:optOneMeans} shows that the optimization step guarantees that $I_{i,j}$  has the smallest clustering cost with respect to all sets of size $(1-\alpha)m_i$ in $P_{i,j}$.

\begin{restatable}{lemma}{optOneMeans}
\label{lemma:optOneMeans}
    For all $i\in [k], j\in [d]$, let $\omega'_{i,j}$ be the collection of all subsets of $(1-\alpha)m_i$ points in $P_{i,j}$. Then 
\begin{align*}
    \cost{I_{i,j}}{I_{i,j}} = \min_{Z' \in \omega'_{i,j}} \cost{Z'}{Z'}.
\end{align*}
\end{restatable}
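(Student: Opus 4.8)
Write $n:=(1-\alpha)m_i$ (an integer; round it otherwise, which changes nothing). Recall that $I_{i,j}$ is the minimum-cost \emph{contiguous} size-$n$ subset of $P_{i,j}$, whereas $\omega'_{i,j}$ ranges over \emph{all} size-$n$ subsets. Since there are finitely many contiguous subsets and each lies in $\omega'_{i,j}$, attainment of the minima gives at once $\cost{I_{i,j}}{I_{i,j}}=\min_{Z\in\omega_{i,j}}\cost{Z}{Z}\ge\min_{Z'\in\omega'_{i,j}}\cost{Z'}{Z'}$; the content of the lemma is the reverse inequality. I would prove it by showing that any size-$n$ subset $Z'$ can be converted, by a sequence of one-point swaps that never increase the cost, into a contiguous subset.

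\paragraph*{Setting up a swap.}
Sort $P_{i,j}$ as $p_1\le\cdots\le p_{m_i}$ and identify a size-$n$ subset with the set of indices it uses; ``contiguous'' then means ``an interval of indices,'' and I will use the \emph{index-span} (largest index used minus smallest index used) as a progress measure. By \cref{lemma:identity} with center $\overline Z$ we have $\cost{Z}{Z}=\sum_{z\in Z}z^2-\tfrac1{|Z|}\bigl(\sum_{z\in Z}z\bigr)^2$, so for a fixed size it suffices to track this expression. Let $Z'$ be a size-$n$ subset whose index set is not an interval, so there is an index $h$ with $\min(Z')<h<\max(Z')$ and $h\notin Z'$; write $p:=p_h$, and let $a,b$ be the coordinates of $Z'$ at its smallest and largest indices. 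Let $Z'_L$ be obtained from $Z'$ by deleting the point at its smallest index and inserting $p_h$, and define $Z'_R$ symmetrically with the largest index. Both have size $n$, and both have strictly smaller index-span than $Z'$ (we remove an extreme index and add the interior index $h$).

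\paragraph*{The key inequality.}
With $\sigma:=\sum_{z\in Z'}z$, a direct expansion of the formula above yields
\[
\cost{Z'_L}{Z'_L}-\cost{Z'}{Z'}=\tfrac{p-a}{n}\bigl[(n-1)p+(n+1)a-2\sigma\bigr],\qquad
\cost{Z'_R}{Z'_R}-\cost{Z'}{Z'}=\tfrac{p-b}{n}\bigl[(n-1)p+(n+1)b-2\sigma\bigr].
\]
Since $a\le p\le b$, these cannot both be strictly positive: assuming they were, the first gives $(n-1)p-2\sigma>-(n+1)a$ and the second (using $p-b\le 0$) gives $(n-1)p-2\sigma<-(n+1)b$, so $a>b$, contradicting $a\le b$. (If $p=a$ or $p=b$, one of the two swaps already leaves the cost unchanged, so one of them is still available.) Hence at least one of $Z'_L,Z'_R$ has cost at most $\cost{Z'}{Z'}$; replacing $Z'$ by it and iterating, the size stays $n$, the cost never increases, and the nonnegative-integer index-span strictly decreases, so after finitely many steps we reach a contiguous subset $Z$ with $\cost{Z}{Z}\le\cost{Z'}{Z'}$. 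Together with the first paragraph this proves $\cost{I_{i,j}}{I_{i,j}}=\min_{Z'\in\omega'_{i,j}}\cost{Z'}{Z'}$.

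\paragraph*{Expected main obstacle.}
The only genuinely delicate step is the sign analysis in the third paragraph: deriving the two swap-difference identities without error and verifying that their bracketed factors cannot be simultaneously of the wrong sign. The remaining ingredients, namely attainment of the minimum over the finite family $\omega_{i,j}$, the one-line rewriting of $\cost{Z}{Z}$ via \cref{lemma:identity}, and termination of the iteration via the shrinking index-span, are routine.
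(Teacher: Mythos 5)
Your proof is correct and rests on the same exchange idea as the paper's: a non-contiguous size-$n$ subset can always be improved by trading one of its extreme points for a missing interior point, so the contiguous optimum is also the global optimum. (You also correctly read $\omega'_{i,j}$ as the family of \emph{all} size-$(1-\alpha)m_i$ subsets rather than only the contiguous ones; the lemma statement as printed has a typo that would make it vacuous.) The execution differs slightly but immaterially: the paper argues by contradiction at the global optimizer, selecting which endpoint to evict as the one farthest from the mean of the interior points and splitting into the cases $|I'_{i,j}|=2$ and $|I'_{i,j}|\ge 3$, whereas you run an iterative improvement with the index-span as a termination measure and decide which endpoint to evict via an explicit identity for the cost change whose two bracketed factors cannot both have the wrong sign; your version is a bit cleaner, avoids the case analysis, and the algebra checks out.
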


Since we know that $|Q_{i,j}| \geq (1-\alpha) m_i$, it can be shown from \cref{lemma:optOneMeans} that the cost of the set $I_{i,j}$ is smaller than that of $Q_{i,j}$. More precisely,
    \begin{align}
        \cost{I_{i,j}}{I_{i,j}}\le \frac{(1-\alpha) m_i}{|Q_{i}|}\cost{Q_{i,j}}{Q_{i,j}} \label{costI_costQ}.
    \end{align}
    
With this fact, we are ready to bound the clustering cost by bounding $|\overline{I_{i,j}} - \overline{Q_{i,j}}|^2$,  
\begin{align*}
     |\overline{I_{i,j}} - \overline{Q_{i,j}}|^2 \le 2 |\overline{I_{i,j}} - \overline{I_{i,j}\cap Q_{i,j}}|^2 + 2 |\overline{I_{i,j}\cap Q_{i,j}} - \overline{Q_{i,j}}|^2.
\end{align*}
Using \cref{lemma:lambdaMean}, we can bound $|\overline{I_{i,j}} - \overline{I_{i,j}\cap Q_{i,j}}|^2$ and $ |\overline{I_{i,j}\cap Q_{i,j}} - \overline{Q_{i,j}}|^2$ respectively by $\cost{I_{i,j}}{I_{i,j}}$ and $\cost{Q_{i,j}}{Q_{i,j}}$. Combining this fact with \cref{costI_costQ}, we can bound,  $|\overline{I_{i,j}} - \overline{Q_{i,j}}|^2 $ by  $\cost{Q_{i,j}}{Q_{i,j}}$. 
 
\begin{restatable}{lemma}{interval}
\label{lemma:interval}
    The following bound holds:
    \[
    |\overline{I_{i,j}} - \overline{Q_{i,j}}|^2 \le \frac{4\alpha}{1-2\alpha}\frac{\cost{Q_{i,j}}{Q_{i,j}}}{|Q_{i}|}.
    \]
\end{restatable}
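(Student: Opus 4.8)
The plan is to follow exactly the decomposition sketched in the text, supplying the quantitative details. Write $A := I_{i,j}$, $B := Q_{i,j}$, $n := |Q_i| = |B|$, and note $|A| = (1-\alpha)m_i$ by construction, so $|A| \le n$ (since $n \ge (1-\alpha)m_i$). Start from the triangle-plus-Cauchy-Schwarz inequality
\begin{align*}
    |\overline{A} - \overline{B}|^2 \le 2|\overline{A} - \overline{A \cap B}|^2 + 2|\overline{A \cap B} - \overline{B}|^2.
\end{align*}
For the first term, apply \cref{lemma:lambdaMean} with $J = A$, $J_1 = A \cap B$: I need a $\lambda_1$ with $|A \cap B| \ge (1-\lambda_1)|A|$. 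Since $|A \setminus B| \le |P_{i,j} \setminus Q_{i,j}| \le \alpha m_i$ and $|A| = (1-\alpha)m_i$, I get $|A\cap B| \ge |A| - \alpha m_i = (1-2\alpha)m_i = \frac{1-2\alpha}{1-\alpha}|A|$, so $\lambda_1 = \frac{\alpha}{1-\alpha}$. This gives $|\overline{A} - \overline{A\cap B}|^2 \le \frac{\alpha/(1-\alpha)}{(1-2\alpha)m_i}\cost{A}{A} = \frac{\alpha}{(1-2\alpha)(1-\alpha)m_i}\cost{A}{A}$.

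For the second term, apply \cref{lemma:lambdaMean} with $J = B$, $J_1 = A \cap B$: here $|A\cap B| \ge (1-2\alpha)m_i \ge \frac{1-2\alpha}{1-\alpha}\cdot\frac{1-\alpha}{1}\cdots$ — more simply, $|B| = n \le m_i^*$ and also we can bound $|B \setminus A| = |Q_{i,j}\setminus I_{i,j}|$; but the cleanest route is to use $|B| \le m_i$ is false in general, so instead note $|A \cap B| \ge (1-2\alpha)m_i$ and $|B| = n$, and we need $n$ in terms of $m_i$. Here I would use $n = |Q_i| \le m_i$ is also false; rather $|Q_i| \ge (1-\alpha)m_i$ and $|Q_i| \ge (1-\alpha)m_i^*$ only give lower bounds. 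The fix: bound $\lambda_2$ via $|B\setminus(A\cap B)| = |B| - |A\cap B| \le |B| - (|A| - \alpha m_i)$. Since $|B \cup A| \le m_i$ (both are subsets of the $m_i$-point set $P_{i,j}$... wait, $B = Q_{i,j} \subseteq P_{i,j}$, yes), we have $|B| + |A| - |A\cap B| = |A\cup B| \le m_i$, hence $|B| - |A\cap B| \le m_i - |A| = \alpha m_i$, so $\lambda_2 := \frac{|B\setminus A|}{|B|} \le \frac{\alpha m_i}{n}$. Then $|\overline{A\cap B} - \overline{B}|^2 \le \frac{\lambda_2}{(1-\lambda_2)n}\cost{B}{B}$; since $\lambda_2 \le \frac{\alpha m_i}{n} \le \frac{\alpha m_i}{(1-\alpha)m_i} = \frac{\alpha}{1-\alpha}$, we get $\frac{\lambda_2}{1-\lambda_2} \le \frac{\alpha/(1-\alpha)}{1-\alpha/(1-\alpha)} = \frac{\alpha}{1-2\alpha}$, so this term is at most $\frac{\alpha}{(1-2\alpha)}\cdot\frac{\cost{B}{B}}{n}$.

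Now combine: using \cref{costI_costQ}, $\cost{A}{A} \le \frac{(1-\alpha)m_i}{n}\cost{B}{B}$, so the first term is at most $\frac{\alpha}{(1-2\alpha)(1-\alpha)m_i}\cdot\frac{(1-\alpha)m_i}{n}\cost{B}{B} = \frac{\alpha}{(1-2\alpha)}\cdot\frac{\cost{B}{B}}{n}$. Adding the two contributions and the factor of $2$:
\begin{align*}
    |\overline{A} - \overline{B}|^2 \le 2\cdot\frac{\alpha}{1-2\alpha}\cdot\frac{\cost{B}{B}}{n} + 2\cdot\frac{\alpha}{1-2\alpha}\cdot\frac{\cost{B}{B}}{n} = \frac{4\alpha}{1-2\alpha}\cdot\frac{\cost{Q_{i,j}}{Q_{i,j}}}{|Q_i|},
\end{align*}
which is the claim. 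The main obstacle — and the only place requiring care — is getting the set-size bookkeeping right: correctly identifying that $|A \setminus B| \le \alpha m_i$ and $|B \setminus A| \le \alpha m_i$ (both following from $A, B \subseteq P_{i,j}$, $|P_{i,j}| = m_i$, $|A| = (1-\alpha)m_i$, and $|P_{i,j}\setminus Q_{i,j}| \le \alpha m_i$), and then feeding the right $\lambda$'s into \cref{lemma:lambdaMean} and simplifying $\frac{\lambda}{1-\lambda}$ uniformly to $\frac{\alpha}{1-2\alpha}$. Everything else is the triangle inequality and \cref{costI_costQ}.
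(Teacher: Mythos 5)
Your proof is correct and follows essentially the same route as the paper's: the same $2a^2+2b^2$ decomposition through the intermediate point $\overline{I_{i,j}\cap Q_{i,j}}$, the same two applications of \cref{lemma:lambdaMean} with $\lambda$'s of order $\alpha/(1-\alpha)$, and the same use of \eqref{costI_costQ} to convert the cost of $I_{i,j}$ into the cost of $Q_{i,j}$. The only difference is that the paper's appendix proof of this lemma also derives \eqref{costI_costQ} itself (by comparing the cost of $I_{i,j}$, which is minimal by \cref{lemma:optOneMeans}, to the expected cost of a uniformly random $(1-\alpha)m_i$-subset of $Q_{i,j}$), whereas you cite it as given; your set-size bookkeeping is otherwise sound and in places cleaner than the paper's.
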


Notice that \cref{lemma:interval} also applies to any set in $\omega_{i,j}$ with cost smaller than the expected cost of a subset of size $(1-\alpha)m_i$ drawn uniformly at random from $Q_{i,j}$. Instead of repeatedly sampling different subsets of $Q_{i,j}$ and returning the one with the lowest clustering cost, the optimization step not only simplifies the analysis of the algorithm, but also  guarantees that we find such a subset efficiently. This is the main innovation of the algorithm.  

In the notations of \cref{lemma:identity}, we can consider $c = \overline{I_{i,j}}, P^*_{i,j} = X, m^*_i = n$. Thus,  we want to bound  $|\overline{P^*_{i,j}} - \overline{I_{i,j}}|^2$ by $\frac{\cost{P^*_{i,j}}{P^*_{i,j}}}{m_i^*}$ to achieve a $(1 + O(\alpha))$-approximation. Recall  that we bound  $|\overline{I_{i,j}} - \overline{Q_{i,j}}|^2$ by  $\frac{\cost{Q_{i,j}}{Q_{i,j}}}{|Q_{i}|}$ in \cref{lemma:interval}. In \cref{lemma:distItoMean} we relate  $\cost{Q_{i,j}}{Q_{i,j}}$ to $\cost{P^*_{i,j}}{P^*_{i,j}}$ as follows, 

\begin{align*}
    \cost{P^*_{i,j}}{P^*_{i,j}} \geq \frac{1-\err}{\err} m^*_i|\overline{P^*_{i,j}} - \overline{Q_{i,j}}|^2  +  \cost{Q_{i,j}}{Q_{i,j}}
\end{align*}
We can then apply \cref{lemma:interval} to bound $|\overline{P^*_{i,j}} - \overline{I_{i,j}}|^2$ by $\frac{\cost{P^*_{i,j}}{P^*_{i,j}}}{m_i^*}$. 

\begin{restatable}{lemma}{distItoMean}
\label{lemma:distItoMean}
    The following bound holds:
    \[
    |\overline{P^*_{i,j}} - \overline{I_{i,j}}|^2 \le \cost{P^*_{i,j}}{P^*_{i,j}} \left(\frac{\alpha}{1-\alpha} + \frac{4\alpha}{(1-2\alpha)(1-\alpha)}\right)/m^*_i
    \]
\end{restatable}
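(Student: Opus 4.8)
\emph{Proof proposal.} The plan is to route through $\overline{Q_{i,j}}$ via the triangle inequality, $|\overline{P^*_{i,j}} - \overline{I_{i,j}}| \le |\overline{P^*_{i,j}} - \overline{Q_{i,j}}| + |\overline{Q_{i,j}} - \overline{I_{i,j}}|$, and bound the two summands separately. Write $c_1 := \cost{P^*_{i,j}}{P^*_{i,j}}$, $c_2 := \cost{Q_{i,j}}{Q_{i,j}}$, $A := |\overline{P^*_{i,j}} - \overline{Q_{i,j}}|^2$, $B := |\overline{Q_{i,j}} - \overline{I_{i,j}}|^2$, and $n := m^*_i$, and set $u := \tfrac{\alpha}{(1-\alpha)n}$, $v := \tfrac{4\alpha}{(1-2\alpha)(1-\alpha)n}$, so that the target is $(\sqrt A + \sqrt B)^2 \le (u+v)\,c_1$. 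The term $B$ is dispatched immediately: \cref{lemma:interval} gives $B \le \tfrac{4\alpha}{1-2\alpha}\tfrac{c_2}{|Q_i|}$, and since $|Q_i| \ge (1-\alpha)n$ this yields $B \le v\,c_2$. So the substantive work is the bound on $A$ and the way the two pieces are recombined.

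For $A$ I need the sharpened inequality $c_1 \ge c_2 + \tfrac{1-\alpha}{\alpha}\,n\,A$ (the displayed bound in the discussion preceding the statement); note that the naive estimate from \cref{lemma:lambdaMean}, or from \cref{lemma:identity} applied to $Q_{i,j}$ with center $\overline{P^*_{i,j}}$, only gives $c_1 \ge c_2 + (1-\alpha)\,n\,A$, which is a factor of $\alpha$ too weak. To recover the stronger form I use the two-part variance decomposition of $P^*_{i,j} = Q_{i,j} \cup R$ with $R := P^*_{i,j}\setminus Q_{i,j}$: splitting the sum defining $c_1$ over $Q_{i,j}$ and $R$, applying \cref{lemma:identity} to each part, and expanding $\overline{P^*_{i,j}}$ as the size-weighted average of $\overline{Q_{i,j}}$ and $\overline{R}$ gives $c_1 = c_2 + \cost{R}{R} + \tfrac{|Q_i||R|}{n}|\overline{Q_{i,j}} - \overline{R}|^2$. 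Since $\overline{P^*_{i,j}} - \overline{Q_{i,j}} = \tfrac{|R|}{n}(\overline{R} - \overline{Q_{i,j}})$, the cross term equals $\tfrac{|Q_i|\,n}{|R|}\,A$; dropping $\cost{R}{R}\ge 0$ and using $|Q_i|\ge(1-\alpha)n$ together with $|R|\le\alpha n$ yields $c_1 \ge c_2 + \tfrac{1-\alpha}{\alpha}\,n\,A$, hence $A \le u(c_1 - c_2)$ and in particular $c_1 \ge c_2$, so the square roots below are real. (If $R = \emptyset$ then $A = 0$ and the claim reduces to $B \le v\,c_2 = v\,c_1$, which holds.)

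It remains to combine $A \le u(c_1-c_2)$ and $B \le v\,c_2$. Expanding $(\sqrt A + \sqrt B)^2 = A + B + 2\sqrt{AB} \le u(c_1-c_2) + v\,c_2 + 2\sqrt{u(c_1-c_2)\cdot v\,c_2}$ and applying AM--GM to the last term in the form $2\sqrt{(u\,c_2)(v(c_1-c_2))} \le u\,c_2 + v(c_1-c_2)$, the bound telescopes to $u\,c_1 + v\,c_1 = (u+v)\,c_1$; substituting the values of $u$, $v$, and $c_1$ gives exactly the stated inequality. The main obstacle is the sharpened decomposition of $\cost{P^*_{i,j}}{P^*_{i,j}}$ in the second paragraph — the extra factor $\tfrac{1}{\alpha}$ over the off-the-shelf estimate is precisely what makes the AM--GM step lossless rather than wasteful, and it is the only place where the smallness of $|R|$ (rather than just the largeness of $|Q_i|$) is exploited.
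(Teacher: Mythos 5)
Your proposal is correct and follows essentially the same route as the paper's proof: the triangle inequality through $\overline{Q_{i,j}}$, the sharpened decomposition $\cost{P^*_{i,j}}{P^*_{i,j}} \ge \cost{Q_{i,j}}{Q_{i,j}} + \frac{1-\alpha}{\alpha}m_i^*\,|\overline{P^*_{i,j}}-\overline{Q_{i,j}}|^2$ obtained from the two-part variance identity, \cref{lemma:interval} for the other summand, and a lossless recombination. Your AM--GM step is just the paper's Cauchy--Schwarz combination in a different guise (and your handling of the $R=\emptyset$ edge case is a small point the paper leaves implicit), so there is nothing to flag.
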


Applying  \cref{lemma:distItoMean} and \cref{lemma:identity} to all $i \in [K], j\in[d]$, we are able to bound the total clustering cost.  

\begin{restatable}{theorem}{kmeans}
\label{theorem:kmeans}
    \Cref{alg:detAlg} is a deterministic algorithm for $k$-means clustering such that given
a data set $P \in \mathbb{R}^{m \times d }$ and a partition $(P_{1},\dots,P_{k})$ with error
rate $\alpha<1/2$, it outputs a $\left(1 + \left(\frac{\alpha}{1-\alpha} + \frac{4\alpha}{(1-2\alpha)(1-\alpha)}\right) \right)$-approximation in time $O\left(d m \log m\right).$
\end{restatable}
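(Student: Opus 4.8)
The plan is to charge the cost of the returned center set $\widehat{C}=\{\widehat{c}_1,\dots,\widehat{c}_k\}$ cluster by cluster and coordinate by coordinate, using the one-dimensional estimate of \cref{lemma:distItoMean} together with Inaba's identity (\cref{lemma:identity}). First I would observe that reassigning each point to its nearest center only decreases cost, so $\costP{P}{\widehat{C}} = \sum_{p\in P}\min_{\ell\in[k]}\|p-\widehat{c}_\ell\|^2 \le \sum_{i\in[k]} \costP{P_i^*}{\widehat{c}_i}$; hence it suffices to bound $\costP{P_i^*}{\widehat{c}_i}$ for each $i$ separately. Since squared Euclidean distance is separable across coordinates, $\costP{P_i^*}{\widehat{c}_i} = \sum_{j\in[d]} \costP{P_{i,j}^*}{\overline{I_{i,j}}}$, using that the $j$-th coordinate of $\widehat{c}_i$ equals $\overline{I_{i,j}}$ by construction.

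Next, I would fix $i$ and $j$ and apply \cref{lemma:identity} with $X = P_{i,j}^*$, $n = m_i^*$, $c = \overline{I_{i,j}}$, giving $\costP{P_{i,j}^*}{\overline{I_{i,j}}} = \cost{P_{i,j}^*}{P_{i,j}^*} + m_i^* \, |\overline{I_{i,j}} - \overline{P_{i,j}^*}|^2$, and then bound the second term directly with \cref{lemma:distItoMean} to obtain $\costP{P_{i,j}^*}{\overline{I_{i,j}}} \le \bigl(1 + \tfrac{\alpha}{1-\alpha} + \tfrac{4\alpha}{(1-2\alpha)(1-\alpha)}\bigr)\cost{P_{i,j}^*}{P_{i,j}^*}$. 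Summing over $j\in[d]$ reconstitutes the same factor times $\cost{P_i^*}{P_i^*}$, and since the mean is the $1$-means optimum, $\cost{P_i^*}{P_i^*} = \costP{P_i^*}{\overline{P_i^*}} \le \costP{P_i^*}{c_i^*}$. Summing over $i\in[k]$ and using $\sum_i \costP{P_i^*}{c_i^*} = \OPT$ yields the stated $\bigl(1 + \tfrac{\alpha}{1-\alpha} + \tfrac{4\alpha}{(1-2\alpha)(1-\alpha)}\bigr)$-approximation (which, if desired, simplifies to $1 + \tfrac{5\alpha-2\alpha^2}{(1-2\alpha)(1-\alpha)}$).

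For the runtime I would unwind the double loop. For a fixed cluster $i$, sorting each one-dimensional projection $P_{i,j}$ costs $O(m_i\log m_i)$; once it is sorted, the candidates in $\omega_{i,j}$ are exactly the $\alpha m_i+1$ windows of $(1-\alpha)m_i$ consecutive values, and the $1$-means cost $\sum_{z\in Z} z^2 - \tfrac{1}{|Z|}\bigl(\sum_{z\in Z} z\bigr)^2$ of successive windows can be updated in $O(1)$ time by maintaining running sums of $z$ and of $z^2$ as the window slides, so the $\operatorname{argmin}$ step costs $O(m_i)$ per coordinate. Thus cluster $i$ takes $O(d\,m_i\log m_i)$, and since $\sum_i m_i = m$ with each $m_i\le m$, the total is $O(dm\log m)$. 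Determinism is immediate once ties in the $\operatorname{argmin}$ are broken by a fixed rule, e.g.\ returning the first optimal window.

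The theorem itself is mostly an assembly of the preceding lemmas — all the analytic content sits in \cref{lemma:interval} and \cref{lemma:distItoMean}, which are assumed here. The only points needing a bit of care are (i) the reduction to the per-$(i,j)$ bound, i.e.\ that the output cost is at most $\sum_i \costP{P_i^*}{\widehat{c}_i}$ and that squared-distance cost separates over coordinates, and (ii) the sliding-window observation that drives the $O(dm\log m)$ bound, which is also the one place where the contiguity in the definition of $\omega_{i,j}$ is used. I do not anticipate a genuine obstacle beyond these.
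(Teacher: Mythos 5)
Your proposal is correct and follows essentially the same route as the paper: bound the output cost by $\sum_i \costP{P_i^*}{\widehat{c}_i}$, separate over coordinates, apply \cref{lemma:identity} with $c=\overline{I_{i,j}}$, invoke \cref{lemma:distItoMean}, and sum; the runtime argument (sort once per coordinate, then a sliding-window scan over the contiguous candidates) is the same as the paper's, just spelled out in slightly more detail.
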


\begin{restatable}{corollary}{comparison}
    \label{cor:comparison}
    For $\alpha\leq 1/7$, \cref{alg:detAlg} achieves a clustering cost of $(1+7.7\alpha)\OPT$.
\end{restatable}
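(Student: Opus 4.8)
The plan is to start from the guarantee of \cref{theorem:kmeans}, namely that \cref{alg:detAlg} outputs a solution of cost at most $\bigl(1+f(\alpha)\bigr)\OPT$ with $f(\alpha)=\frac{\alpha}{1-\alpha}+\frac{4\alpha}{(1-2\alpha)(1-\alpha)}$, and to reduce the corollary to the elementary inequality $f(\alpha)\le 7.7\alpha$ for all $\alpha\in[0,1/7]$. The case $\alpha=0$ is trivial, so I would assume $\alpha\in(0,1/7]$ and divide by $\alpha$, reducing the task to showing $g(\alpha):=\frac{1}{1-\alpha}+\frac{4}{(1-2\alpha)(1-\alpha)}\le 7.7$ on $(0,1/7]$.

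Next I would prove that $g$ is (strictly) increasing on $[0,1/2)$: on that interval the quantities $1-\alpha$ and $(1-2\alpha)(1-\alpha)$ are both positive and strictly decreasing, so each of the two summands of $g$ is positive and strictly increasing, and hence so is their sum. This monotonicity lets me bound $g(\alpha)\le g(1/7)$ for every $\alpha\in[0,1/7]$, avoiding any calculus. Finally I evaluate at the endpoint: $\frac{1}{1-1/7}=\frac{7}{6}=\frac{35}{30}$ and $\frac{4}{(1-2/7)(1-1/7)}=\frac{4}{(5/7)(6/7)}=\frac{196}{30}$, so $g(1/7)=\frac{35+196}{30}=\frac{231}{30}=7.7$. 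Combining, $f(\alpha)=\alpha\,g(\alpha)\le 7.7\alpha$ for $\alpha\in[0,1/7]$, which plugged back into \cref{theorem:kmeans} yields the claimed $(1+7.7\alpha)\OPT$ bound.

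\textbf{Main obstacle.} There is essentially no technical difficulty here; the statement is a routine consequence of \cref{theorem:kmeans}. The only points requiring a little care are (i) justifying the monotonicity of $g$ cleanly enough that checking the single value $\alpha=1/7$ suffices, and (ii) noticing that the constant $7.7$ is in fact attained exactly at $\alpha=1/7$ (i.e. the bound is tight at the endpoint), which explains why that particular numerical constant is the right one to state.
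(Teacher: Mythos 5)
Your proposal is correct and follows essentially the same route as the paper: both arguments reduce to bounding $\frac{\alpha}{1-\alpha}+\frac{4\alpha}{(1-2\alpha)(1-\alpha)}$ by substituting the worst-case value $\alpha=1/7$ into the denominators, yielding $\frac{7\alpha}{6}+\frac{196\alpha}{30}=\frac{231\alpha}{30}=7.7\alpha$. Your explicit monotonicity argument just makes rigorous the substitution the paper performs implicitly, and your observation that the bound is tight at the endpoint is a nice (if inessential) addition.
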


\renewcommand{\cost}{\mathrm{cost}}

\global\long\def\OPT{\mathrm{OPT}}%

\global\long\def\Ev{\mathcal{E}}%

\global\long\def\Ex{\mathcal{\mathbb{\mathbb{E}}}}%

\section{\texorpdfstring{$k$}{k}-Medians}

In this section we describe our algorithm for learning-augmented $k$-medians clustering and a theoretical bound on the clustering cost and the run-time. Our algorithm works for ambient spaces equipped with any metric $\mbox{dist}(\cdot,\cdot)$ for which it is possible to efficiently compute the geometric median, which is the minimizer of the $1$-medians clustering cost. For instance, it is known from prior work \citep{DBLP:journals/corr/CohenLMPS16} that the geometric median with respect to the $\ell_2$-metric can be efficiently calculated, and appealing to this result as a subroutine allows us to derive a guarantee for learning-augmented $k$-medians with respect to the $\ell_2$ norm. 

\begin{theorem}
\label{thm:1-median}(\citet{DBLP:journals/corr/CohenLMPS16}) There is an algorithm that computes
a $(1+\epsilon)$-approximation to the geometric median of a set of
size $n$ in $d$-dimensional Euclidean space with respect to the $\ell_2$ distance metric with constant probability
in $O(nd\log^{3}(n/\epsilon))$ time.
\end{theorem}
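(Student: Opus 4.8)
This is a restatement of the main theorem of \citet{DBLP:journals/corr/CohenLMPS16}, so the plan below follows that paper and is only a sketch. Write $f(x)=\sum_{i=1}^{n}\|x-a_i\|_2$ for the $1$-medians objective on the data set $\{a_1,\dots,a_n\}$ and let $x^*=\operatorname{argmin}_x f(x)$ be the exact geometric median. First I would reduce to a bounded regime: after translating one data point to the origin and rescaling so that $\max_i\|a_i\|_2=1$, one has $f(x^*)\in[1,n]$ for $n\ge 2$, and an $O(1)$-accurate starting point can be produced in $O(nd\,\mathrm{polylog}(n))$ time by a simple subroutine, so it remains to improve an $O(1)$-approximation to a $(1+\epsilon)$-approximation. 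The plan is to do this with a smoothed surrogate and its central path. For $t>0$ set
\begin{align*}
f_t(x)\;=\;\sum_{i=1}^{n}\sqrt{\,1+t^{2}\|x-a_i\|_2^{2}\,},
\end{align*}
and let $x^*_t=\operatorname{argmin}_x f_t(x)$. Since $\tfrac1t f_t\to f$ on bounded sets and $f(x^*_t)\le f(x^*)+O(n/t)$, taking $t_{\mathrm{fin}}=\Theta(n/\epsilon)$ makes $x^*_{t_{\mathrm{fin}}}$ a $(1+\epsilon)$-approximate geometric median (using $f(x^*)\ge 1$), so it is enough to compute an approximate minimizer of $f_{t_{\mathrm{fin}}}$.

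The algorithm is path following: starting from $t_0=\Theta(1)$, whose optimum is close to the starting point above, repeatedly replace $t$ by $(1+c)t$ for a fixed constant $c$ and ``re-center'' by running (inexact) Newton steps on the new $f_t$ from the previous iterate. Two estimates have to be established here: (a) a constant multiplicative increase of $t$ perturbs $x^*_t$ little enough that the previous iterate still lies in the basin of fast convergence of Newton's method for the new $f_t$, so that $O(\log(n/\epsilon))$ Newton steps restore centrality to the needed tolerance; and (b) the number of phases is $O(\log(t_{\mathrm{fin}}/t_0))=O(\log(n/\epsilon))$. Both reduce to smoothness/self-concordance-type bounds for $f_t$ — each summand is a smoothed norm with smoothing radius $\Theta(1/t)$, so a constant-factor increase of $t$ is a ``local'' perturbation — but the crucial point, and the technical heart of \citet{DBLP:journals/corr/CohenLMPS16}, is that one cannot afford the $O(\sqrt n)$ phase count of a black-box interior-point method: a potential-function argument exploiting the sum-of-Euclidean-norms structure of $f$ is needed to bring the phase count down to polylogarithmic. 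I expect this to be the main obstacle.

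It then remains to bound the per-step cost. A Newton step for $f_t$ at $x$ needs the gradient $g=\nabla f_t(x)\in\R^{d}$ and a solution of $\nabla^{2}f_t(x)\,z=g$, where $\nabla^{2}f_t(x)$ is a sum of $n$ rank-structured $d\times d$ matrices, each a positive multiple of the identity minus a positive-semidefinite rank-one term; hence $g$ and any matrix--vector product $\nabla^{2}f_t(x)\,v$ are computable in $O(nd)$ time, while a dense solve ($O(d^{3})$) is too expensive. Instead I would solve the Newton system only approximately, using $O(\log(n/\epsilon))$ iterations of a suitable (preconditioned) iterative solver — each iteration one matrix--vector product of cost $O(nd)$ — and appeal to a standard inexact-Newton argument to see that the approximate step still converges fast; this is also where randomization enters, giving the constant success probability. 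Multiplying the three logarithmic factors — $O(\log(n/\epsilon))$ phases, $O(\log(n/\epsilon))$ Newton steps per phase, and $O(\log(n/\epsilon))$ solver iterations per Newton step — by the $O(nd)$ cost of a solver iteration yields the claimed running time $O(nd\log^{3}(n/\epsilon))$.
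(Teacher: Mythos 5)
This statement is not proved in the paper at all: it is \cref{thm:1-median}, an external result quoted verbatim from \citet{DBLP:journals/corr/CohenLMPS16} and used as a black-box subroutine for the median computation in \cref{alg:kMeds}. So there is no in-paper argument to compare your proposal against, and for the purposes of this paper nothing needs to be (re)proved here.

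Taken on its own terms, your sketch is a reasonable high-level account of how the cited paper establishes the result: reduce to a constant-factor starting point, smooth the sum-of-norms objective with a parameter $t$, follow the central path from $t_0=\Theta(1)$ to $t_{\mathrm{fin}}=\Theta(n/\epsilon)$, and pay $O(nd)$ per Hessian access. You also correctly identify the real obstacle — a generic self-concordance analysis would give $\Theta(\sqrt{n})$ path-following phases, and the whole point of \citet{DBLP:journals/corr/CohenLMPS16} is a bespoke analysis of the central path of this particular objective (showing the Hessian near the path is spectrally a multiple of the identity minus an essentially rank-one term, which simultaneously permits constant-factor increases of $t$ and fast approximate Newton solves). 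But since you explicitly defer exactly that argument ("I expect this to be the main obstacle"), what you have is a plan with its technical heart left open, not a proof. That is entirely appropriate for a theorem that the paper itself only cites; just be aware that if this result actually had to be proved, the deferred potential-function/spectral-structure argument is where all the work lies.
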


Looking ahead at the pseudocode of \cref{alg:kMeds}, we see that to eventually derive a bound on the time complexity, we would need to account for adjusting the success probability in the many calls to \cref{thm:1-median}.

\begin{corollary}
\label{cor:1-median}It follows from \cref{thm:1-median} that
with probability $1-\frac{\delta}{2k}$, we have that for all $j\in[R]$,
$\cost(P_{i}\setminus P_{i}',\widehat{c}_i^j)$ is a $(1+\gamma)$-approximation
to the optimal $1$-median cost for $P_{i}\setminus P_{i}'$ while
taking time $O(m_{i}d\log^{3}(m_{i}/\gamma)\log(Rk/\delta))$.
\end{corollary}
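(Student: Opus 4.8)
The plan is to derive this as a standard probability-amplification consequence of \cref{thm:1-median}, exploiting the fact that the $1$-median objective of a \emph{fixed} candidate center can be evaluated exactly in linear time. Fix a cluster $i$ and an iteration $j\in[R]$, and let $X$ denote the set $P_i\setminus P_i'$ whose geometric median is computed in that iteration (the removed set $P_i'$ may depend on $j$; the argument is uniform over it, so we simply condition on it). Since $X\subseteq P_i$ we have $|X|\le m_i$. By \cref{thm:1-median}, one invocation of the geometric-median subroutine on $X$ with accuracy $\gamma$ runs in time $O(|X|\,d\log^3(|X|/\gamma))=O(m_i d\log^3(m_i/\gamma))$ and returns a center that is a $(1+\gamma)$-approximation for the $1$-median cost of $X$ with some universal constant probability $p_0>0$. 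The subroutine call inside \cref{alg:kMeds} is taken to be: run this basic procedure $t$ times independently, evaluate $\cost(X,\cdot)$ exactly for each of the $t$ returned centers (an $O(|X|\,d)=O(m_i d)$ computation per center), and let $\widehat c_i^j$ be the returned center of smallest cost.

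First I would argue correctness and the per-iteration failure probability. Because $\cost(X,\widehat c_i^j)$ equals the minimum of the $t$ evaluated costs, $\widehat c_i^j$ is a $(1+\gamma)$-approximation as soon as \emph{at least one} of the $t$ independent runs succeeds --- crucially, we never need to know the optimal $1$-median cost of $X$, only to compare the $t$ candidates against one another. The probability that all $t$ runs fail is at most $(1-p_0)^t$, so choosing $t=\lceil\log_{1/(1-p_0)}(2kR/\delta)\rceil=O(\log(Rk/\delta))$ bounds the failure probability of iteration $j$ by $\frac{\delta}{2kR}$. A union bound over $j\in[R]$ then gives that, with probability at least $1-\frac{\delta}{2k}$, every $\widehat c_i^j$ with $j\in[R]$ is simultaneously a $(1+\gamma)$-approximation, which is the stated guarantee.

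Next I would account for the running time: iteration $j$ performs $t$ runs of the basic subroutine, each $O(m_i d\log^3(m_i/\gamma))$, together with $t$ exact cost evaluations at $O(m_i d)$ each, the latter being dominated, so iteration $j$ costs $O\!\big(t\,m_i d\log^3(m_i/\gamma)\big)=O\!\big(m_i d\log^3(m_i/\gamma)\log(Rk/\delta)\big)$, as claimed. (If the time in the statement is meant to cover all $R$ iterations for cluster $i$, one multiplies by $R$, a factor carried through the top-level analysis of \cref{alg:kMeds}.) The one point that needs care --- and the reason the amplification keeps the minimum-cost candidate rather than stopping at the first success or taking a majority vote --- is that a run of \cref{thm:1-median} is Monte Carlo with no success certificate: since the optimal $1$-median cost of $X$ is unknown, we cannot detect which run succeeded. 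Taking the minimum over the $t$ runs circumvents this, since that minimum is no worse than any run (in particular any successful one), and $\cost(X,c)=\sum_{x\in X}\|x-c\|$ is exactly computable for fixed $c$; beyond this observation the proof is a routine repetition-plus-union-bound argument, and I do not expect any substantial obstacle.
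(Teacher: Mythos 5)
Your proposal is correct and matches the argument the paper intends (the corollary is stated without an explicit proof, but the time analysis in the proof of \cref{thm:augKMedians} makes clear that the intended argument is exactly this: run the constant-success-probability subroutine of \cref{thm:1-median} $O(\log(Rk/\delta))$ times per iteration, keep the candidate of minimum exactly-evaluated cost, and union bound over the $R$ iterations). Your observation that one must select by minimum cost rather than by detecting a successful run is a correct and worthwhile point of care that the paper leaves implicit.
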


We refer the reader to \cref{def:common} for all notation that is undefined in this section; the only additional notation we introduce is the following definition.

\begin{definition}
    We make the following definitions:
    \begin{enumerate}
        \item We denote the optimal clustering cost of $P$ by $\OPT$, and the optimal $1$-median clustering cost of $P_i^*$ by $\OPT_i$, with which notation we have that $\sum_{i\in [k]} \OPT_i = \OPT$.
        \item We denote the distance $\mbox{dist}(x,y)$ between two points by $\|x - y\|$.
    \end{enumerate}
\end{definition}

We now describe at a high-level a run of our algorithm. \Cref{alg:kMeds} operates sequentially on each cluster estimate; for the cluster estimate $P_i$, it samples a point $x\in P_i$ uniformly at random, and removes from $P_i$ the $\lceil \alpha m_i \rceil$-many points that lie furthest from $x$. It then computes the median of the clipped set, which is where we appeal to an algorithm for the geometric median, for instance \cref{thm:1-median} when the ambient metric for the input data set is the $\ell_2$ metric. It turns out that this subroutine already gives us a good median for the cluster $P_i^*$ with constant probability (\cref{lem:mainCostBound}); to boost the success probability we repeat this subroutine some $R$-many times (the exact expression is given in the pseudocode and justified in \cref{lem:estimateCost}), and pick the median with the lowest cost, denoted $\widehat{c}_i$. Collecting the $\widehat{c}_i$ across $i\in [k]$, we get our final solution $\{\hat{c}_1,\dots, \hat{c}_k \}$.

\begin{algorithm}
\begin{algorithmic}
\caption{Learning-augmented $k$-Medians Clustering}
\label{alg:kMeds}
    \Require Data set $P$ of $m$ points, Partition $P = P_1 \cup \dots P_k$ from a predictor, accuracy parameter $\alpha<1/2$
    \For{$i\in [k]$}
        \State Let $R = \frac{2}{1-2\alpha} \log \frac{2k}{\delta}$
        \For{$j\in [R]$}
            \State Sample $x\sim P_{i}$ u.a.r.
            \State Let $P_{i}'$ be the $\lceil\alpha m_{i}\rceil$ points farthest from $x$
            \State $\widehat{c}_i^j\leftarrow$ median of $P_{i}\setminus P_{i}'$.
        \EndFor
        \State Let $\widehat{c_{i}}$ be the $\widehat{c}_i^j$ with minimum cost
    \EndFor
    \State Return $\{\hat{c}_1,\dots, \hat{c}_k \}$
\end{algorithmic}
\end{algorithm}

Although our algorithm itself is relatively straightforward, the analysis turns out to be more involved. We trace the proof at a high level in this section and mention the main steps, and defer all proofs to the appendix. 

We see that it would suffice to allocate a center that works well for the true cluster $P_i^*$, but we only have access to the set $P_i$ with the promise that they have a significant overlap (as characterized by $\alpha$). Fixing an arbitrary true median $c_i^*$, one key insight is that the ``false" points, i.e. points in $P_i \backslash P_i^*$ will only significantly distort the median if they happen to lie far from $c_i^*$. If there were a way to identify and remove these false points which lie far from $c_i^*$, then simply computing the geometric median of the clipped data set should work well.

By a direct application of Markov's inequality it is possible to show that a point $x$ picked uniformly at random will in fact lie at a distance on the order of the average clustering cost $OPT_i/m_i$ with constant probability, as formalized in \cref{lem:distOfx}.

\begin{restatable}{lemma}{distOfx}
\label{lem:distOfx} With probability $\frac{1-2\alpha}{2}$, $\|x-c_{i}^{*}\|\leq2\OPT_{i}/m_{i}$.
\end{restatable}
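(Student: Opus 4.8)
The plan is to apply Markov's inequality to the random draw $x \sim P_i$, measuring distance to the fixed true median $c_i^*$. First I would recall that $\OPT_i = \mathrm{cost}(P_i^*, c_i^*) = \sum_{p \in P_i^*} \|p - c_i^*\|$, so the average distance from a point of $P_i^*$ to $c_i^*$ is $\OPT_i / m_i^*$. Since $x$ is drawn uniformly from $P_i$ rather than from $P_i^*$, I need to account for the fact that $x$ might land in $P_i \setminus P_i^*$; the trick is to only demand the inequality when $x \in Q_i = P_i \cap P_i^*$, and then union-bound against the bad event $x \notin Q_i$.

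Concretely, condition on the event $\{x \in Q_i\}$. Since $Q_i \subseteq P_i^*$, we have $\sum_{p \in Q_i} \|p - c_i^*\| \le \sum_{p \in P_i^*}\|p - c_i^*\| = \OPT_i$, so by Markov's inequality applied to the uniform distribution on $P_i$,
\begin{align*}
\Pr_{x \sim P_i}\!\left[\|x - c_i^*\| > \frac{2\OPT_i}{m_i} \ \text{and}\ x \in Q_i\right] \le \frac{m_i}{2\OPT_i}\cdot \frac{1}{m_i}\sum_{p \in Q_i}\|p - c_i^*\| \le \frac{1}{2}.
\end{align*}
On the other hand, $\Pr[x \notin Q_i] = 1 - |Q_i|/m_i \le \alpha$, since $|Q_i| \ge (1-\alpha)\max(m_i, m_i^*) \ge (1-\alpha)m_i$. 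Combining, the event $\|x - c_i^*\| \le 2\OPT_i/m_i$ fails with probability at most $\tfrac12 + \alpha$, hence holds with probability at least $1 - \tfrac12 - \alpha = \tfrac{1-2\alpha}{2}$, which is the claimed bound.

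I expect the only genuine subtlety — not really an obstacle — is being careful that the Markov bound is applied with the right normalization (the uniform measure on $P_i$ has mass $1/m_i$ per point, and we sum only over $Q_i$), and that the two bad events are handled by a plain union bound rather than trying to reason about them jointly. Everything else is a one-line computation; there is no need to invoke \cref{lemma:identity} or \cref{lemma:lambdaMean} here, as this lemma is purely about the sampled pivot $x$ and does not yet involve the clipped set or the geometric median.
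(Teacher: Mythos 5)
Your proof is correct and follows essentially the same route as the paper: Markov's inequality applied to the distance $\|x-c_i^*\|$ restricted to the true positives $Q_i = P_i\cap P_i^*$, combined with the bound $\Pr[x\notin Q_i]\le\alpha$. The only difference is cosmetic bookkeeping — the paper conditions on $x\in Q_i$ and multiplies by $\Pr[x\in Q_i]\ge 1-\alpha$, while you bound the joint bad event and union-bound — and both arrangements yield exactly $\frac{1-2\alpha}{2}$.
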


As we will condition on this good event holding, it will be convenient to introduce the notation $\Ev$.

\begin{definition}
    We let $\Ev$ denote the event that $\|x-c_{i}^{*}\|\leq2\OPT_{i}/m_{i}$.
\end{definition}

Having identified a good point $x$ to serve as a proxy for where the true median $c_i^*$ lies, we need to figure out a good way to clip the data set so as to avoid false points which lie very far from $c_i^*$. We observe that since there are guaranteed to be at most $\lceil \alpha n \rceil$-many false points, if we were to remove the $\lceil \alpha n \rceil$-many points that lie farthest from $x$ (denoted $P_i'$), then we either remove false points that lie very far from $c_i^*$, or true points ($P_i^* \cap P_i'$) which are at the same distance from $c_i^*$ as the remaining false points (the points in $P_i \backslash (P_i'\cup P_i^*)$. In particular, this implies that the impact of the remaining false points is roughly dominated by the clustering cost of an equal number of true points, and we are able to exploit this to show that the clustering cost of $P_i\backslash P'_i$ with respect to its own median estimate $\widehat{c}_i$ is already close to that of the true center $P_i^*$.

\begin{restatable}{lemma}{mainCostBound}
    \label{lem:mainCostBound}Conditioned on $\Ev$, $\cost(P_{i}\setminus P_{i}',\widehat{c}_i^j)\leq(1+5\alpha)\OPT_{i}$.
\end{restatable}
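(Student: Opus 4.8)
\textbf{Proof proposal for \Cref{lem:mainCostBound}.}

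The plan is to bound $\cost(P_i \setminus P_i', \widehat{c}_i^j)$ by comparing it, via the triangle inequality, to the cost with respect to the true center $c_i^*$, and then controlling the discrepancy between $P_i \setminus P_i'$ and $P_i^*$ by a careful accounting of which points get removed. Write $n = m_i$ for brevity. First I would observe that $\widehat{c}_i^j$ is (essentially, up to the $(1+\gamma)$ from \Cref{thm:1-median}, which I will treat as exact in the intuition and absorb into constants later) the optimal $1$-median of $P_i \setminus P_i'$, so $\cost(P_i \setminus P_i', \widehat{c}_i^j) \le \cost(P_i \setminus P_i', c_i^*)$. Hence it suffices to bound $\sum_{p \in P_i \setminus P_i'} \|p - c_i^*\|$ in terms of $\OPT_i = \sum_{p \in P_i^*} \|p - c_i^*\|$.

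The core combinatorial step is the following. Split $P_i \setminus P_i' = (Q_i \setminus P_i') \cup (B \setminus P_i')$ where $Q_i = P_i \cap P_i^*$ are the true positives and $B = P_i \setminus P_i^*$ are the false positives, with $|B| \le \lceil \alpha n \rceil = |P_i'|$. The term $\sum_{p \in Q_i \setminus P_i'} \|p - c_i^*\| \le \sum_{p \in Q_i} \|p - c_i^*\| \le \OPT_i$ is immediate. For the false points $B \setminus P_i'$, I would use the defining property of $P_i'$: every point kept (in particular every point of $B \setminus P_i'$) is at distance at most $\|x - y\|$ from $x$ for every removed point $y \in P_i'$. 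Since $|B| \le |P_i'|$, the number of removed \emph{true} points $|P_i' \cap Q_i| \ge |P_i'| - |B| \ge |P_i'| - |B \setminus P_i'| \cdot 0 \dots$ — more precisely $|P_i' \cap P_i^*| = |P_i'| - |P_i' \setminus P_i^*| \ge |P_i'| - |B| \ge |B \setminus P_i'|$, so there are at least as many removed true points as there are kept false points. I can therefore injectively match each kept false point $b \in B \setminus P_i'$ to a distinct removed true point $y_b \in P_i' \cap Q_i$ with $\|b - x\| \le \|y_b - x\|$. Then by the triangle inequality and conditioning on $\Ev$,
\begin{align*}
\sum_{b \in B \setminus P_i'} \|b - c_i^*\| &\le \sum_{b \in B \setminus P_i'} \left( \|b - x\| + \|x - c_i^*\| \right) \le \sum_{b} \|y_b - x\| + |B| \cdot \frac{2\OPT_i}{n} \\
&\le \sum_{b} \left( \|y_b - c_i^*\| + \|c_i^* - x\| \right) + 2\alpha \OPT_i + \frac{2\OPT_i}{n} \le \OPT_i + 2\alpha\OPT_i + 2\alpha\OPT_i + \frac{2\OPT_i}{n},
\end{align*}
using $|B| \le \lceil \alpha n\rceil \le \alpha n + 1$ and $\sum_b \|y_b - c_i^*\| \le \sum_{y \in P_i^*}\|y - c_i^*\| = \OPT_i$. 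Adding the true-point contribution gives roughly $(2 + 4\alpha + O(1/n))\OPT_i$, and a slightly more careful bookkeeping (e.g. noting kept true points and removed true points are disjoint subsets of $P_i^*$ so their costs add to at most $\OPT_i$ total, not $2\OPT_i$) should tighten the constant to $(1 + 5\alpha)\OPT_i$; I would also need to fold in the $(1+\gamma)$ factor from the approximate median computation, which contributes a lower-order term absorbed into the $5\alpha$ by choosing $\gamma$ small.

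The main obstacle is getting the constant exactly right: the naive triangle-inequality accounting double-counts the true points (once as kept, once as matched targets for false points) and yields a worse constant than $1+5\alpha$. The fix is to be more economical — since $Q_i \setminus P_i'$ and $P_i' \cap Q_i$ partition a subset of $Q_i \subseteq P_i^*$, the sum $\sum_{p \in Q_i \setminus P_i'}\|p-c_i^*\| + \sum_{b}\|y_b - c_i^*\|$ is at most $\sum_{p \in Q_i}\|p - c_i^*\| \le \OPT_i$ in total, not $2\OPT_i$. Combining this single $\OPT_i$ with the two $2\alpha\OPT_i$ terms (from $|B|\cdot\|x-c_i^*\| $ and from $|B|\cdot\|c_i^*-x\|$ after a second triangle inequality, each bounded using $\Ev$) and handling the ceiling carefully gives the claimed $(1+5\alpha)\OPT_i$. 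I would double-check the edge case where $|B| < |P_i'|$ strictly (then some removed points are true, which only helps) and where $\lceil \alpha n\rceil$ rounding matters for small $n$.
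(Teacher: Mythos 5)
Your proposal is correct and follows essentially the same route as the paper's proof: the same injective matching of kept false positives to removed true points (justified by the fact that every removed point is at least as far from $x$ as every kept point, and that $|P_i'| \geq |P_i\setminus P_i^*|$), the same triangle-inequality chain $\|p - c_i^*\| \le \|y_p - c_i^*\| + 2\|x - c_i^*\|$ combined with the event $\Ev$, and the same absorption of the $(1+\gamma)$ approximation factor of the median subroutine into the final $5\alpha$. The "more economical bookkeeping" you flag is exactly how the paper closes the argument — the removed true points' cost is subtracted from $\cost(P_i\cap P_i^*, c_i^*)$ in the identity $\cost(P_i\setminus P_i', c_i^*) = \cost(P_i\cap P_i^*, c_i^*) - \cost(A, c_i^*) + \cost(B, c_i^*)$, so the matched targets are never double-counted and the bound lands at $(1+4\alpha)\OPT_i$ before the $\gamma$ slack.
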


Since the event $\Ev$ that the randomly sampled point $x$ is close to a true median $c_i^*$ is true only with constant probability, we boost the success probability by running this subroutine some $R$ times and letting $\widehat{c}_i$ be the median estimate with respect to which the respective clipped data set had the lowest clustering cost.

\begin{restatable}{lemma}{estimateCost}
\label{lem:estimateCost}For $R=O\left(\frac{1}{(1-2\alpha)}\log\left(\frac{2k}{\delta}\right)\right)$
many repetitions, with probability at least $1-\frac{\delta}{2k}$,
we have that $\cost(P_{i} \setminus P'_{i},\widehat{c_{i}})\leq(1+5\alpha)\OPT_{i}$.    
\end{restatable}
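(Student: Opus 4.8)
The plan is to boost the constant-probability guarantee of \cref{lem:mainCostBound} into a high-probability guarantee by a standard independent-repetition argument, so the work is mostly bookkeeping. First I would combine \cref{lem:distOfx} and \cref{lem:mainCostBound}: in a single iteration $j$, the sampled point $x$ satisfies the event $\Ev$ with probability at least $\frac{1-2\alpha}{2}$, and conditioned on $\Ev$ we have $\cost(P_i\setminus P_i', \widehat{c}_i^j)\le (1+5\alpha)\OPT_i$. Hence each iteration independently produces a ``good'' median estimate (one whose clipped set has cost at most $(1+5\alpha)\OPT_i$) with probability at least $p:=\frac{1-2\alpha}{2}$. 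Since the $R$ iterations use fresh independent randomness, the probability that \emph{none} of them is good is at most $(1-p)^R\le e^{-pR}$. Plugging in $R=\frac{2}{1-2\alpha}\log\frac{2k}{\delta}$ gives $pR = \log\frac{2k}{\delta}$, so $e^{-pR}=\frac{\delta}{2k}$, and with probability at least $1-\frac{\delta}{2k}$ at least one iteration $j^\star$ is good.

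The remaining point is that the algorithm does not know which iteration is good: it selects $\widehat{c_i}$ as the $\widehat{c}_i^j$ of minimum cost. Here I would argue that the selection rule can only help. If $j^\star$ is a good iteration, then by definition $\cost(P_i\setminus (P_i')^{(j^\star)}, \widehat{c}_i^{j^\star})\le (1+5\alpha)\OPT_i$; since $\widehat{c_i}$ is chosen to minimize the clipped-set cost over all $j\in[R]$, the cost achieved by $\widehat{c_i}$ on its own clipped set is no larger than that achieved by $\widehat{c}_i^{j^\star}$, hence also at most $(1+5\alpha)\OPT_i$. One subtlety to state carefully is that the clipped set $P_i'$ depends on $j$ (it is a function of the sampled $x$), so the quantity being minimized is $\cost(P_i\setminus P_i'^{(j)}, \widehat{c}_i^j)$ as a function of $j$; this is exactly what the pseudocode's ``$\widehat{c}_i^j$ with minimum cost'' refers to, and I would make that explicit so the comparison with iteration $j^\star$ is valid.

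Putting these together yields: with probability at least $1-\frac{\delta}{2k}$, $\cost(P_i\setminus P_i', \widehat{c_i})\le (1+5\alpha)\OPT_i$, where $P_i'$ now denotes the clipped set associated with the selected iteration. I would also note that this probability bound is over the randomness of a single cluster's $R$ iterations, and is stated per cluster $i$; the $\frac{\delta}{2k}$ (rather than $\frac{\delta}{k}$) leaves room for the later union bound that must also absorb the failure probability of the approximate geometric-median calls from \cref{cor:1-median}. The main (minor) obstacle is simply being precise about the dependence of $P_i'$ on the iteration and confirming that the ``min-cost'' selection is taken with respect to the correct objective; the probabilistic content is a one-line Chernoff/geometric-tail estimate with no real difficulty.
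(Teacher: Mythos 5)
Your proposal is correct and follows essentially the same route as the paper: a geometric-tail bound showing that the event $\Ev$ holds in at least one of the $R$ iterations with probability $1-\frac{\delta}{2k}$, combined with the observation that the minimum-cost selection inherits the $(1+5\alpha)\OPT_i$ bound from any good iteration. Your explicit treatment of the per-iteration clipped set $P_i'^{(j)}$ and of where the geometric-median failure probability is absorbed is, if anything, slightly more careful than the paper's own write-up.
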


We see from \cref{lem:trueVsClipped} that the set $P_i\backslash P'_i$ differs from the true positives $P_i \cap P_i^*$ by sets of size at most $\lceil \alpha n\rceil$. It follows that as long as the distance between $\widehat{c}_i$ and $c_i^*$ is on the order of $\OPT_i/n$, they will not influence the clustering cost by more than an $O(\alpha \OPT_i)$ additive term, which we will be able to absorb into the $(1+O(\alpha))$ multiplicative approximation factor. We formalize this in \cref{lem:distBound}.

\begin{restatable}{lemma}{distBound}
    \label{lem:distBound}If $\cost(P_{i}\setminus P_{i}',\widehat{c_{i}})\leq(1+5\alpha)\OPT$,
    then $\|\widehat{c_{i}}-c_{i}^{*}\|\le\frac{2+5\alpha}{(1-2\alpha)}\frac{\OPT_{i}}{n}$.
\end{restatable}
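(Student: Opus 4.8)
The plan is to control $\|\widehat c_i - c_i^*\|$ by a triangle inequality averaged over a set of points on which we can bound the cost to \emph{both} centers simultaneously. Averaging over all of $P_i \setminus P_i'$ would not work directly, since the false positives that survive the clipping can only be charged to $c_i^*$ under the event $\Ev$, whereas \cref{lem:distBound} is meant to be a deterministic consequence of the cost bound alone. The right set to average over is $S := (P_i \setminus P_i') \cap P_i^* = Q_i \setminus P_i'$, the true positives that survive the clipping: it lies inside $P_i \setminus P_i'$, so its cost to $\widehat c_i$ is governed by the hypothesis, and it lies inside $P_i^*$, so its cost to $c_i^*$ is governed by $\OPT_i$.

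Concretely, for every $p \in S$ we have $\|\widehat c_i - c_i^*\| \le \|\widehat c_i - p\| + \|p - c_i^*\|$, and averaging this inequality over $p \in S$ gives
\[
\|\widehat c_i - c_i^*\| \;\le\; \frac{\cost(S,\widehat c_i) + \cost(S,c_i^*)}{|S|}.
\]
Now I would bound the three quantities on the right. First, $S \subseteq P_i \setminus P_i'$ and the $1$-median cost is a sum of nonnegative terms, so $\cost(S,\widehat c_i) \le \cost(P_i\setminus P_i',\widehat c_i) \le (1+5\alpha)\OPT_i$ by the hypothesis (an inexact geometric-median routine changes nothing here, since the $(1+\gamma)$ slack of \cref{cor:1-median} is already folded into this bound via \cref{lem:estimateCost}). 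Second, $S \subseteq Q_i \subseteq P_i^*$, so $\cost(S,c_i^*) \le \cost(P_i^*,c_i^*) = \OPT_i$. Third, with $n = m_i$: since $|P_i'| = \lceil\alpha n\rceil$ and $|Q_i| \ge (1-\alpha)\max(m_i,m_i^*) \ge (1-\alpha)n$ --- equivalently, by \cref{lem:trueVsClipped}, $Q_i$ and $P_i\setminus P_i'$ differ by at most $\lceil\alpha n\rceil$ points on each side --- we get $|S| = |Q_i| - |Q_i \cap P_i'| \ge (1-\alpha)n - \lceil\alpha n\rceil \ge (1-2\alpha)n$. Substituting the three bounds yields $\|\widehat c_i - c_i^*\| \le \frac{(2+5\alpha)\OPT_i}{(1-2\alpha)n}$, which is the claim.

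The only place requiring care is the lower bound $|S| \ge (1-2\alpha)n$: one must argue that the clipping discards at most $\lceil\alpha n\rceil$ true positives (because only $\lceil\alpha n\rceil$ points are discarded in total) while at least $(1-\alpha)n$ true positives were present to begin with --- which is exactly what \cref{lem:trueVsClipped} packages --- and then keep honest track of the ceilings when collapsing $(1-\alpha)n - \lceil\alpha n\rceil$ to $(1-2\alpha)n$. Apart from this bookkeeping, the proof is just the single averaged triangle inequality above; the only genuine idea is to anchor that inequality on $S$ rather than on $P_i\setminus P_i'$.
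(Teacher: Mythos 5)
Your proof is correct and is essentially identical to the paper's: the paper also anchors the (rearranged) triangle inequality on the set $P_i^*\cap(P_i\setminus P_i')$, sums over it, and uses the same three bounds $\cost(S,\widehat c_i)\le(1+5\alpha)\OPT_i$, $\cost(S,c_i^*)\le\OPT_i$, and $|S|\ge(1-2\alpha)m_i$. The ceiling bookkeeping you flag is glossed over in the paper as well and does not change the argument.
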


We finally put everything together to show that the clustering cost of the set of true points $P_i\cap P_i^*$ with respect to the estimate $\widehat{c}_i$ is only at most an additive $O(\alpha \OPT_i)$ more than the cost with respect to the true median $c_i^*$. The key technical point in the analysis is that we can only appeal to the fact that the cost of $P_i \backslash P'_i$ is close to $OPT_i$, and we cannot directly reason about $\widehat{c}_i$ apart from appealing to \cref{lem:distBound}.

\begin{restatable}{lemma}{truePosCost}
    \label{lem:truePosCost}
    With probability $1-\delta/k$, $\cost(P_{i}\cap P_{i}^{*},\widehat{c_{i}})\leq\cost(P_{i}\cap P_{i}^{*},c_{i}^{*})+\frac{\left(5\alpha+10\alpha^{2}\right)\OPT_{i}}{1-2\alpha}$.
\end{restatable}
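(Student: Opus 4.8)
The plan is to condition on the ``good event'' on which both \cref{lem:estimateCost} and \cref{cor:1-median} hold; by a union bound this event has probability at least $1-\delta/k$, and on it we may assume simultaneously that $\cost(P_i\setminus P_i',\widehat{c}_i)\le(1+5\alpha)\OPT_i$ and that $\cost(P_i\setminus P_i',\widehat{c}_i)$ is within a factor $1+\gamma$ of the optimal $1$-median cost of $P_i\setminus P_i'$, where $\gamma$ is a parameter of the median subroutine that we are free to choose and for which $\gamma=\Theta(\alpha)$ will suffice. Write $Q_i:=P_i\cap P_i^*$, $S:=P_i\setminus P_i'$, and split $Q_i$ into $A:=Q_i\setminus S$ (true positives that were clipped away) together with $Q_i\cap S$, and split $S$ into $B:=S\setminus Q_i$ (false positives that survived clipping) together with $Q_i\cap S$; by \cref{lem:trueVsClipped} we have $|A|\le\lceil\alpha m_i\rceil$ and $|B|\le\lceil\alpha m_i\rceil$. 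Finally set $D:=\|\widehat{c}_i-c_i^*\|$, for which \cref{lem:distBound} (applicable because of the cost bound above) gives $D\le\frac{2+5\alpha}{1-2\alpha}\cdot\frac{\OPT_i}{m_i}$.

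First I would decompose $\cost(Q_i,\widehat{c}_i)=\cost(Q_i\cap S,\widehat{c}_i)+\cost(A,\widehat{c}_i)$. For the clipped true positives $A$, a triangle inequality at $c_i^*$ gives $\cost(A,\widehat{c}_i)\le\cost(A,c_i^*)+|A|D$. For the surviving true positives I would write $\cost(Q_i\cap S,\widehat{c}_i)=\cost(S,\widehat{c}_i)-\cost(B,\widehat{c}_i)$ and then use the approximate optimality of $\widehat{c}_i$ as a $1$-median of $S$, namely $\cost(S,\widehat{c}_i)\le(1+\gamma)\cost(S,c_i^*)=(1+\gamma)\bigl(\cost(Q_i\cap S,c_i^*)+\cost(B,c_i^*)\bigr)$, together with a reverse triangle inequality $\cost(B,c_i^*)-\cost(B,\widehat{c}_i)\le|B|D$, to get $\cost(Q_i\cap S,\widehat{c}_i)\le(1+\gamma)\cost(Q_i\cap S,c_i^*)+|B|D+\gamma\cost(B,c_i^*)$.

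Adding the two bounds, the leading terms $(1+\gamma)\cost(Q_i\cap S,c_i^*)+\cost(A,c_i^*)$ recombine into $\cost(Q_i,c_i^*)+\gamma\cost(Q_i\cap S,c_i^*)$, leaving
\[\cost(Q_i,\widehat{c}_i)\le\cost(Q_i,c_i^*)+(|A|+|B|)D+\gamma\bigl(\cost(Q_i\cap S,c_i^*)+\cost(B,c_i^*)\bigr).\]
I would then bound the error terms: $(|A|+|B|)D\le 2\lceil\alpha m_i\rceil\cdot\frac{2+5\alpha}{1-2\alpha}\cdot\frac{\OPT_i}{m_i}$; and for the $\gamma$-contribution, $\cost(Q_i\cap S,c_i^*)\le\OPT_i$ while $\cost(B,c_i^*)\le\cost(B,\widehat{c}_i)+|B|D\le\cost(S,\widehat{c}_i)+|B|D\le(1+5\alpha)\OPT_i+|B|D$, so the $\gamma$-terms total $O\bigl(\gamma\OPT_i/(1-2\alpha)\bigr)$. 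Choosing $\gamma$ a small enough constant multiple of $\alpha$ makes this at most $\tfrac{\alpha\OPT_i}{1-2\alpha}$, and substituting $\lceil\alpha m_i\rceil\le\alpha m_i+1$ and simplifying (using $\alpha<1/2$ to absorb the residual $1/m_i$ and $\gamma$ slack) yields $\cost(Q_i,\widehat{c}_i)\le\cost(Q_i,c_i^*)+\frac{(5\alpha+10\alpha^2)\OPT_i}{1-2\alpha}$, which is the claim.

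I expect the main obstacle to be the one flagged in the text: we cannot reason about $\widehat{c}_i$ except through the cost bound on $S$ and the distance bound of \cref{lem:distBound}, and in particular we may not assume that $\Ev$ held in the repetition that produced $\widehat{c}_i$. Consequently the comparison of $\cost(Q_i,\widehat{c}_i)$ to $\cost(Q_i,c_i^*)$ must be routed entirely through $S$ and its $O(\alpha m_i)$-sized discrepancies $A$ and $B$ from $Q_i$; a naive triangle inequality over all of $Q_i$ is unaffordable, since it would incur $|Q_i|D=\Theta(\OPT_i)$ rather than the needed $O(\alpha\OPT_i)$. The remaining effort — carefully propagating the $(1+\gamma)$ factor from the approximate median computation and the $\lceil\cdot\rceil$ rounding so that the additive term comes out exactly $\frac{(5\alpha+10\alpha^2)\OPT_i}{1-2\alpha}$ — is routine but is where the precise constants are fixed.
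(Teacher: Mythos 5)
Your proposal is correct and follows essentially the same route as the paper: condition on the intersection of the events from \cref{lem:estimateCost} and \cref{cor:1-median}, express $P_i\cap P_i^*$ as $(P_i\setminus P_i')$ with $B$ removed and $A$ added (\cref{lem:trueVsClipped}), charge the cost difference over $P_i\setminus P_i'$ to the $(1+\gamma)$-approximate optimality of $\widehat{c}_i$, and charge the $A$ and $B$ discrepancies via triangle inequalities to $(|A|+|B|)\cdot\|\widehat{c}_i-c_i^*\|$ using \cref{lem:distBound}. Your error accounting ($\frac{(4\alpha+10\alpha^2)\OPT_i}{1-2\alpha}$ from the distance term plus $\frac{\alpha\OPT_i}{1-2\alpha}$ from the $\gamma$ slack with $\gamma=\Theta(\alpha)$) reproduces the paper's constants exactly.
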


We can now derive our main cost bound stated in \cref{lem:augOneMedianCost}. Doing so only requires that we account for the mislabelled points $P_i^* \backslash P_i$ which were not accounted for during our clustering. Again, from \cref{lem:distBound} it suffices to appeal to the fact that the estimate $\widehat{c}_i$ lies within an $O(\alpha \OPT_i /n)$ distance of the true median $c_i^*$.

\begin{restatable}{lemma}{augOneMedianCost}
    \label{lem:augOneMedianCost}With probability $1-\delta/k$, $\cost(P_{i}^{*},\hat{c}_{i})\leq(1+c\alpha)\OPT_{i}$
    for $c=\frac{7+10\alpha-10\alpha^{2}}{(1-\alpha)(1-2\alpha)}$.    
\end{restatable}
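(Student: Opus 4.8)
The plan is to split $P_i^*$ into the true positives $Q_i = P_i\cap P_i^*$ and the mislabelled set $P_i^*\setminus P_i$, bound the $\widehat{c}_i$-cost of each part separately, and then compare the sum against $\OPT_i = \cost(P_i^*,c_i^*)$. I would first record two consequences of the label-error promise $|Q_i|\ge(1-\alpha)\max(m_i,m_i^*)$ combined with the trivial $|Q_i|\le\min(m_i,m_i^*)$: namely $|P_i^*\setminus P_i| = m_i^* - |Q_i| \le \alpha m_i^*$, and $m_i^*\le m_i/(1-\alpha)$. The second inequality is what reconciles the $m_i^*$ that appears when counting mislabelled points with the $m_i$ in the denominator of the distance bound, and it is the source of the extra $(1-\alpha)$ factor in the final constant $c$.

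Next I would condition on the probability-$(1-\delta/k)$ event guaranteed by \cref{lem:truePosCost}. On this event we get directly $\cost(Q_i,\widehat{c}_i)\le \cost(Q_i,c_i^*) + \frac{(5\alpha+10\alpha^2)\OPT_i}{1-2\alpha}$, and, since this event also certifies the hypothesis of \cref{lem:distBound} (which is exactly the conclusion of \cref{lem:estimateCost}), we also get $\|\widehat{c}_i - c_i^*\|\le\frac{2+5\alpha}{1-2\alpha}\cdot\frac{\OPT_i}{m_i}$. For the mislabelled part I would apply the triangle inequality pointwise, $\|p-\widehat{c}_i\|\le\|p-c_i^*\| + \|\widehat{c}_i - c_i^*\|$ for each $p\in P_i^*\setminus P_i$, sum over these points, and substitute the size bound $|P_i^*\setminus P_i|\le\alpha m_i^*\le\frac{\alpha m_i}{1-\alpha}$ together with the distance bound to obtain $\cost(P_i^*\setminus P_i,\widehat{c}_i)\le \cost(P_i^*\setminus P_i,c_i^*) + \frac{\alpha(2+5\alpha)}{(1-\alpha)(1-2\alpha)}\OPT_i$.

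Adding the two cost estimates and using $\cost(Q_i,c_i^*)+\cost(P_i^*\setminus P_i,c_i^*)=\cost(P_i^*,c_i^*)=\OPT_i$ then yields
\[
\cost(P_i^*,\widehat{c}_i)\le \OPT_i\left(1 + \frac{5\alpha+10\alpha^2}{1-2\alpha} + \frac{2\alpha+5\alpha^2}{(1-\alpha)(1-2\alpha)}\right),
\]
and I would close with a one-line simplification: over the common denominator $(1-\alpha)(1-2\alpha)$ the bracketed excess has numerator $(5\alpha+10\alpha^2)(1-\alpha) + 2\alpha + 5\alpha^2 = \alpha(7+10\alpha-10\alpha^2)$, which is precisely $c\alpha$ with $c=\frac{7+10\alpha-10\alpha^2}{(1-\alpha)(1-2\alpha)}$, as claimed.

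The calculation is entirely routine; the one place that needs care is the accounting of randomness. I want the final failure probability to remain $\delta/k$ rather than accumulating another $\delta/(2k)$, so I would make explicit that the single event of \cref{lem:truePosCost} already implies the hypothesis of \cref{lem:distBound} (it must, since \cref{lem:truePosCost} is itself proven through \cref{lem:distBound} and \cref{lem:estimateCost}), so invoking \cref{lem:distBound} on that same event costs nothing extra. The only other subtlety is the $m_i^*$-versus-$m_i$ normalization flagged above, which is handled cleanly by $m_i^*\le m_i/(1-\alpha)$.
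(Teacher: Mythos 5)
Your proposal is correct and follows essentially the same route as the paper: decompose $\cost(P_i^*,\widehat{c}_i)$ into the true positives $P_i\cap P_i^*$ (handled by \cref{lem:truePosCost}) and the false negatives $P_i^*\setminus P_i$ (handled by the triangle inequality together with the distance bound of \cref{lem:distBound} and the size bound $|P_i^*\setminus P_i|\le\alpha m_i^*\le\alpha m_i/(1-\alpha)$), then combine over the common denominator $(1-\alpha)(1-2\alpha)$. Your write-up is in fact more careful than the paper's on two points it leaves implicit — the $m_i^*$-versus-$m_i$ normalization and the fact that no additional failure probability is incurred when reusing the event of \cref{lem:truePosCost} — so nothing further is needed.
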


We now formalize our main cost bound, success probability and run-time guarantees in \cref{thm:augKMedians}.

\begin{theorem}
\label{thm:augKMedians}
There is an algorithm for $k$-medians clustering such that given
a data set $P$ and a labelling $(P_{1},\dots,P_{k})$ with error
rate $\alpha<1/2$, it outputs a set of centers $\widehat{C}=(\widehat{c_{1}},\dots,\widehat{c_{k}})$
such that $\sum_{i\in k}\cost(P_{i}^*,\widehat{c_{i}})\leq(1+c\alpha)\OPT_{i}$
for $c=\frac{7+10\alpha-10\alpha^{2}}{(1-\alpha)(1-2\alpha)}$, and
does so in time $O\left(\frac{1}{1-2\alpha}md\log^{3}\left(\frac{m}{\alpha}\right)\log\left(\frac{k\log(k/\delta)}{(1-2\alpha)\delta}\right)\log\left(\frac{k}{\delta}\right)\right)$.
\end{theorem}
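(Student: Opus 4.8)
The plan is to derive \cref{thm:augKMedians} as an aggregation over the $k$ predicted clusters: the cost bound follows from summing the per-cluster guarantee of \cref{lem:augOneMedianCost} together with a union bound, and the running-time bound follows from summing the per-cluster cost of \cref{cor:1-median} (which dominates the $O(m_id)$ cost of the sampling-and-clipping step) over $i\in[k]$. All of the analytic content already lives in the earlier lemmas, so what remains is a union bound and a careful tally.

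For the cost bound, I would first pin down the implementation detail left implicit in \cref{alg:kMeds}: each ``median of $P_i\setminus P_i'$'' is computed by the algorithm of \cref{thm:1-median} with accuracy parameter $\gamma=\Theta(\alpha)$, which is exactly the setting assumed by \cref{cor:1-median} and hence by \cref{lem:mainCostBound} through \cref{lem:augOneMedianCost}, so none of those statements needs to be re-proved. Fix $i\in[k]$. \Cref{lem:augOneMedianCost} states that with probability at least $1-\delta/k$ we have $\cost(P_i^*,\widehat{c}_i)\le(1+c\alpha)\OPT_i$ with $c=\frac{7+10\alpha-10\alpha^2}{(1-\alpha)(1-2\alpha)}$, and this $\delta/k$ already accounts for the failure events of \cref{cor:1-median} and \cref{lem:estimateCost}. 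A union bound over $i\in[k]$ shows that with probability at least $1-\delta$ all $k$ of these events hold simultaneously; on that event, since $(P_1^*,\dots,P_k^*)$ is a partition of $P$,
\begin{align*}
    \cost(P,\widehat{C})\le\sum_{i\in[k]}\cost(P_i^*,\widehat{c}_i)\le(1+c\alpha)\sum_{i\in[k]}\OPT_i=(1+c\alpha)\OPT,
\end{align*}
using $\sum_{i\in[k]}\OPT_i=\OPT$; the middle term is the stated bound on $\sum_{i\in[k]}\cost(P_i^*,\widehat{c}_i)$.

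For the running time, fix $i$ and recall that the outer loop runs $R=\frac{2}{1-2\alpha}\log\frac{2k}{\delta}$ times. Per iteration, sampling $x$, evaluating $\|x-p\|$ for all $p\in P_i$, and selecting the $\lceil\alpha m_i\rceil$ farthest points takes $O(m_id)$; by \cref{cor:1-median}, computing the $(1+\gamma)$-approximate median of $P_i\setminus P_i'$ with success probability boosted so that all $Rk$ calls succeed takes $O\!\left(m_id\log^3(m_i/\gamma)\log(Rk/\delta)\right)$, which dominates the $O(m_id)$ term. Substituting $\gamma=\Theta(\alpha)$ (so $\log^3(m_i/\gamma)=O(\log^3(m/\alpha))$) and $R=\frac{2}{1-2\alpha}\log\frac{2k}{\delta}$ (so $\log(Rk/\delta)=O\!\left(\log\frac{k\log(k/\delta)}{(1-2\alpha)\delta}\right)$ and $R=O\!\left(\frac{1}{1-2\alpha}\log\frac{k}{\delta}\right)$), the total work on cluster $i$ is $O\!\left(\frac{1}{1-2\alpha}\,m_id\,\log^3(m/\alpha)\,\log\frac{k}{\delta}\,\log\frac{k\log(k/\delta)}{(1-2\alpha)\delta}\right)$; summing over $i$ with $\sum_i m_i=m$ gives exactly the claimed runtime. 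The step I expect to require the most care is precisely this accounting --- choosing $\gamma=\Theta(\alpha)$ so the $\log^3$ factor reads $\log^3(m/\alpha)$ while keeping the cost lemmas intact, and checking that the probability-amplification factor $\log(Rk/\delta)$ for the geometric-median subroutine together with the outer repetition count $R$ multiply out to exactly the two logarithmic factors $\log\frac{k}{\delta}\cdot\log\frac{k\log(k/\delta)}{(1-2\alpha)\delta}$ in the statement, with every $O(m_id)$ housekeeping term absorbed. Beyond this bookkeeping there is no conceptual obstacle, since \cref{lem:augOneMedianCost} and \cref{cor:1-median} already carry the substantive work.
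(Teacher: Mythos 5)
Your proposal is correct and follows essentially the same route as the paper: a union bound over the per-cluster guarantee of \cref{lem:augOneMedianCost} for the cost, and a per-cluster tally dominated by the \cref{cor:1-median} calls (with $\gamma=\Theta(\alpha)$ and $R=O(\frac{1}{1-2\alpha}\log\frac{k}{\delta})$) summed over $i$ for the runtime. The only cosmetic difference is that the paper justifies $\sum_i T(m_i)\le T(m)$ via convexity of the per-cluster time bound, where you simply use $\sum_i m_i=m$ after bounding each logarithmic factor by its value at $m$; both are fine.
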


\begin{proof}
We see from \cref{lem:augOneMedianCost} that by applying our subroutine
for $1$-median clustering on each labelled partition $P_{i}$, we
get a center $\widehat{c_{i}}$ with the promise that with probability
$1-\frac{\delta}{k},$ $\cost(P_{i}^*,\widehat{c_{i}})=(1+c\alpha)\OPT_{i}$.
By the union bound, it follows that with probability $1-\delta$,
$\sum_{i\in[k]}\cost(P_{i}^*,\widehat{c_{i}})\leq\sum_{i\in k}(1+c\alpha)\OPT_{i}=(1+c\alpha)\OPT$. Since $P = P_1^* \cup \dots \cup P_k^*$, it follows that $\cost(P,\hat{C}) = (1+c\alpha)\OPT$. 

The time taken to execute the $1$-median clustering subroutine on
partition $P_{i}$ is $R(m_{i}d+O(m_{i}\log m_{i})+O(m_{i}d\log^{3}(m_{i}/\gamma)\log(Rk/\delta))+m_{i}d)$.
This is because we have $R$ iterations, in each of which we first
compute the distances of all $m_{i}$ points from the sampled point
$x$ in time $m_{i}d$, followed by sorting the $m_{i}$ many points
by their distances in time $O(m_{i}\log m_{i})$, followed by $O(\log(Rk/\delta))$
many iterations of the median computation for the clipped sets (wherein we appeal to \cref{cor:1-median}), followed
by a calculation of the $1$-median clustering cost achieved in time
$m_{i}d$. We recall that we set $R=O\left(\frac{1}{1-2\alpha}\log\frac{k}{\delta}\right)$.
Further, we note that the expression for the upper bound on the time
complexity is convex in $m_{i}$, so if we were to denote the value
of this expression on a set of size $m_{i}$ by $T(m_{i}),$it follows
that $\sum_{i\in[k]}T(m_{i})\leq T\left(\sum_{i\in[k]}m_{i}\right)=T(m)$.
Putting everything together, we get that the net time complexity is
$O\left(\frac{1}{1-2\alpha}md\log^{3}\left(\frac{m}{\alpha}\right)\log\left(\frac{k\log(k/\delta)}{(1-2\alpha)\delta}\right)\log\left(\frac{k}{\delta}\right)\right)$. 
\end{proof}

\newcommand{\AC}[1]{$\ll$\textsf{\color{red} AC : #1}$\gg$}

\section{Experiments}
In this section, we evaluate \cref{alg:detAlg} and \cref{alg:kMeds} on real-world datasets. Our  experiments were done on a i9-12900KF processor with 32GB RAM. For all experiments, we fix the number of points to be allocated $k=10$, and report the average and the standard deviation error of the clustering cost over $20$ independent runs \footnote[1]{The repository is hosted at \href{https://github.com/thydnguyen/LA-Clustering}{\upshape \color {blue}github.com/thydnguyen/LA-Clustering}.}.

\textbf{Datasets.} We test the algorithms on the testing set of the \textbf{CIFAR-10} dataset \citep{krizhevsky2009learning} ($m=10^4, d = 3072$), the \textbf{PHY} dataset from KDD Cup 2004 \citep{kdd}, and the \textbf{MNIST} dataset \citep{deng2012mnist} ($m=1797 , d= 64$). For the PHY dataset , we take $m=10^4$ random samples to form our dataset ($d=50$).

\textbf{Predictor description.} For each dataset, we create a predictor by first finding good $k$-means and $k$-medians solutions. Specifically, for $k$-means we initialize by \texttt{kmeans++} and then run Lloyd's algorithm until convergence. For $k$-medians, we use the "alternating" heuristic \citep{park2009simple} of the $k$-medoids problem to find the center of each cluster. In both settings, we use the label given to each point by the $k$-means and $k$-medians solutions to form the optimal partition $(P_1^*,\ldots, P_{10}^*)$ (recall we set $k=10$). In order to test the algorithms' performance under different error rates of the predictor, for each cluster $i$, we change the labels of the $\alpha m_i$  points closest to the mean (or median) to that of a random center. For every dataset, we generate the set of corrupted labels $(P_1,\ldots, P_{10})$ for  $\alpha$ from $0.1$ to $0.5$. Furthermore, we use the same set of optimal partition $(P_1^*,\ldots, P_{10}^*)$ across all instances of the algorithms.  By fixing the optimal partition, we can investigate the effects of increasing $\alpha$ on the clustering cost.  

\textbf{Guessing the error rate.} Note that in most situations, we will not have access to the error rate $\alpha$ and must try out different guesses of $\alpha$ then return the clustering with the best cost. For \cref{alg:detAlg}, \cref{alg:kMeds}, and the $k$-medians algorithm of \citet{DBLP:journals/corr/abs-2110-14094}, we iterate over 15 possible value  of $\alpha$ uniformly spanning the interval $[0.1, 0.5]$. For the $k$-means algorithm of \citet{DBLP:journals/corr/abs-2110-14094}, the algorithm is defined for $\alpha < 1/5$ (not to be confused with the assumption that $\alpha < 1/7$ for the bound on the clustering cost). Thus, the range is $[0.1, 1/5]$ for the algorithm. 

\textbf{Baselines.} We report the clustering costs of the initial \textbf{optimal} $k$-means and $k$-medians solution $(P^*_1,\ldots,P^*_{10})$  along with that of the naive approach of taking the average and geometric median of each group returned by the \textbf{predictor}, e.g., returning $(\overline{P_1},\ldots, \overline{P_{10}})$ for $k$-means. The two baselines help us see how much the clustering cost increases for different error rate $\alpha$. The clustering cost of the algorithm without corruption can also be seen as a lower bound on the cost of the learning-augmented algorithms. Following \citet{DBLP:journals/corr/abs-2110-14094}, we  use  \textbf{random sampling} as another baseline. We first randomly select a $q$-fraction of points from each cluster for $q$ varied from $1\%$ to $50\%$. Then, we compute the means and the geometric medians of the sampled points to calculate the clustering cost.  Finally, we return the clustering corresponding to the value of $q$ with the best cost.

We use the implementation provided in \citet{DBLP:journals/corr/abs-2110-14094} for their $k$-means algorithm. Although both our $k$-medians algorithm and the algorithm in \citet{DBLP:journals/corr/abs-2110-14094} use the approach in \citet{DBLP:journals/corr/CohenLMPS16} as the subroutine to compute the geometric median in nearly linear time,  we use Weiszfeld's algorithm as implemented in \citet{pillutla:etal:rfa}, a well-known method to compute the geometric medians, for the $k$-medians algorithms. To generate the predictions, we use \citet{scikit-learn, scikit-learn-contrib} for the implementations of the $k$-means and $k$-medoids algorithms, and the code provided in \citet{DBLP:journals/corr/abs-2110-14094} for the implementation of their $k$-means algorithm.

For \cref{alg:kMeds}, we can treat the number of rounds $R$ as a hyperparameter. We set $R=1$;  as shown below,  this is already enough to achieve a good performance compared to the other approaches.     

\subsection{Results}
In \Cref{kmeans}, we omit the Sampling and the Prediction approach for the PHY dataset as they have much larger clustering cost than ours and the $k$-means algorithm in \citet{DBLP:journals/corr/abs-2110-14094}. For the CIFAR-10 dataset, we observe that the approach in \citet{DBLP:journals/corr/abs-2110-14094} has slightly better clustering costs as $\alpha$ increases. For the MNIST dataset, our approach has slightly improved costs across all values of $\alpha$.  For the PHY dataset,  observe that \cref{alg:detAlg} is comparable to the \citet{DBLP:journals/corr/abs-2110-14094}.

In summary, the mean clustering cost of the two learning-augmented algorithms are similar across the datasets. It is important to note that our algorithm achieves similar clustering cost to that of \citet{DBLP:journals/corr/abs-2110-14094} without any variance as it is a deterministic technique. 

\begin{figure}
\centering
\begin{subfigure}{.3\textwidth}
  \centering
  \includegraphics[width=\linewidth]{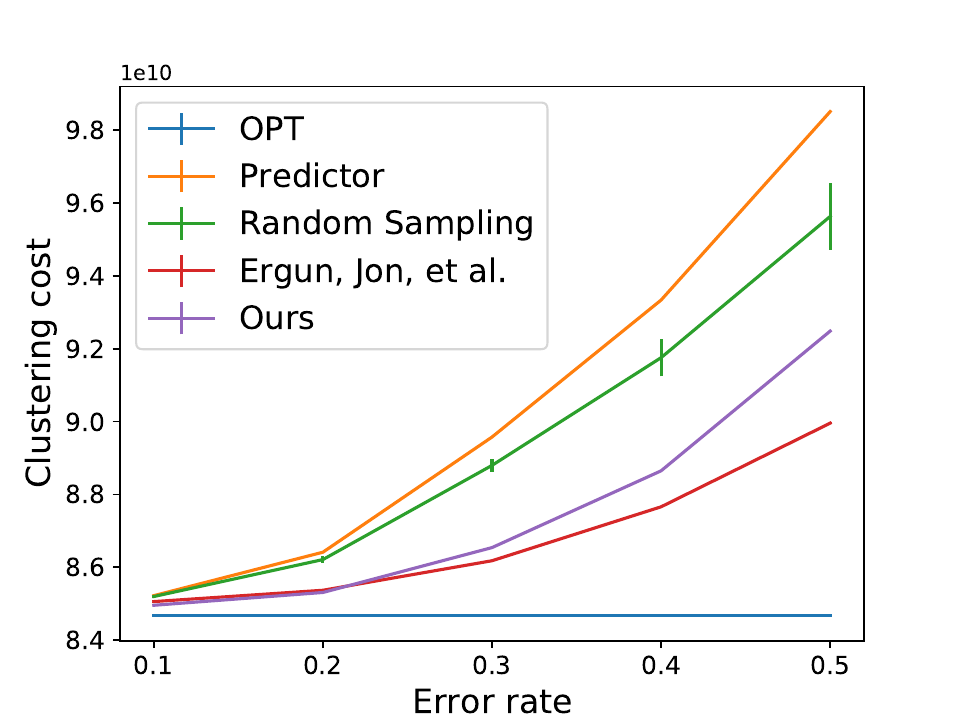}
  \caption{CIFAR-10}
\end{subfigure}\hfill
\begin{subfigure}{.3\textwidth}
  \centering
  \includegraphics[width=\linewidth]{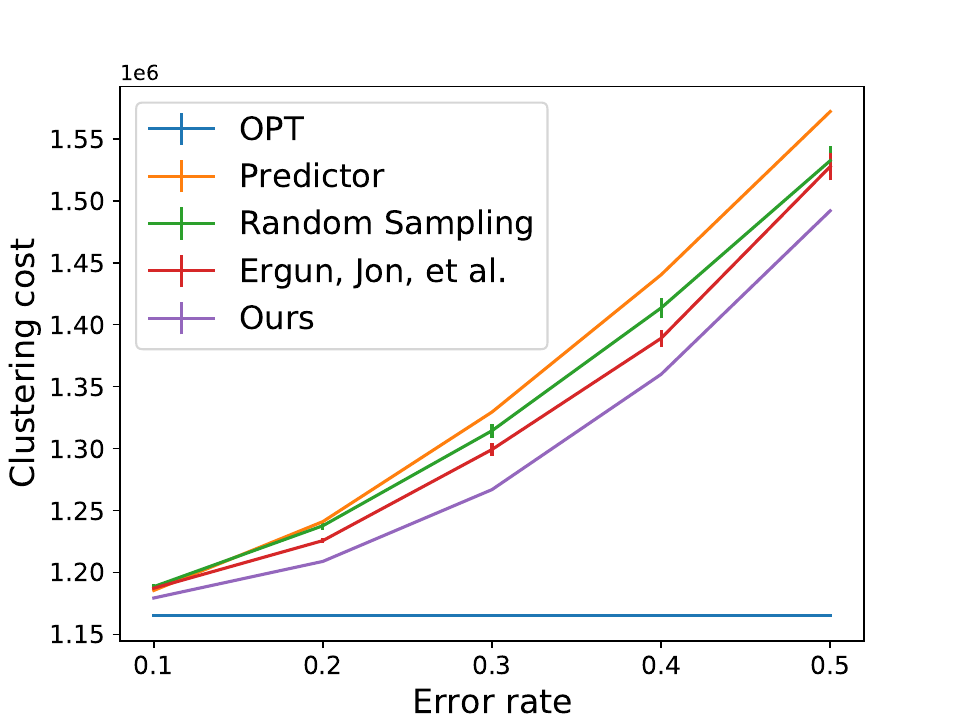}
  \caption{MNIST}
\end{subfigure}\hfill
\begin{subfigure}{.3\textwidth}
  \centering
  \includegraphics[width=\linewidth]{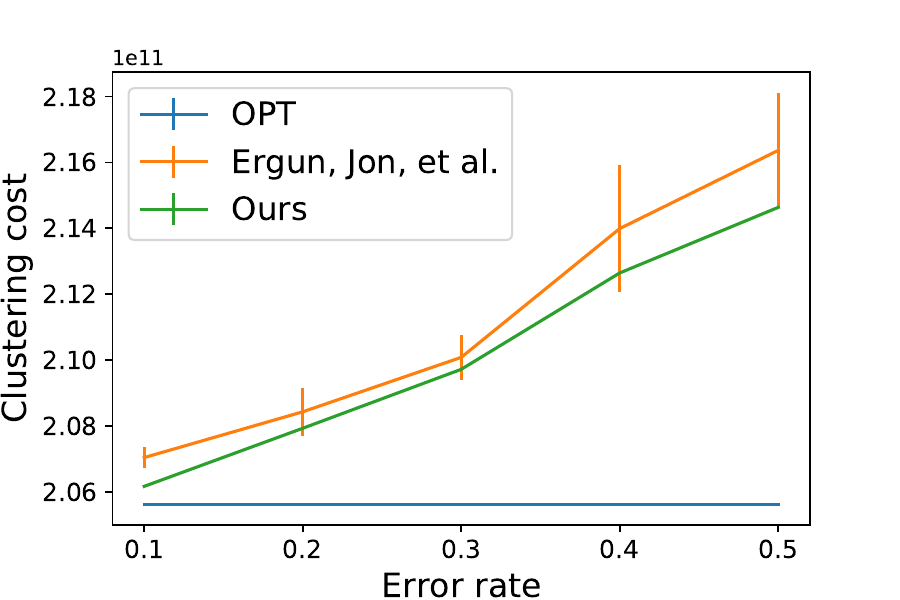}
  \caption{PHY}
\end{subfigure}
\caption{Experimental comparison of \cref{alg:detAlg} with prior work and baselines for $k$-Means}
\label{kmeans}
\end{figure}

\begin{figure}
\centering
\begin{subfigure}{.3\textwidth}
  \centering
  \includegraphics[width=\linewidth]{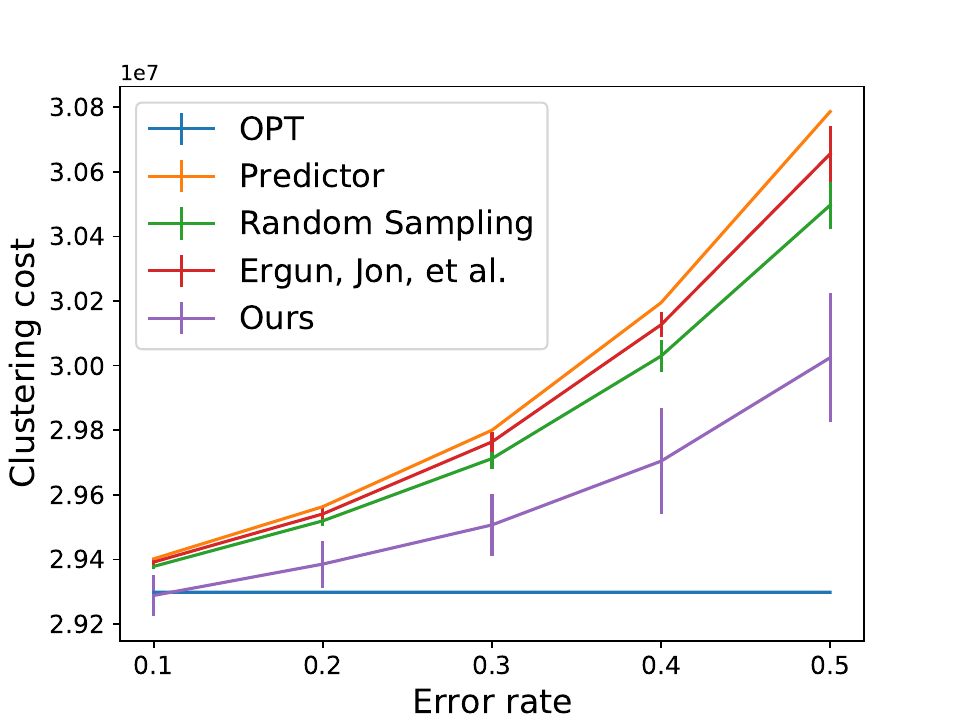}
  \caption{CIFAR-10}
\end{subfigure}\hfill
\begin{subfigure}{.3\textwidth}
  \centering
  \includegraphics[width=\linewidth]{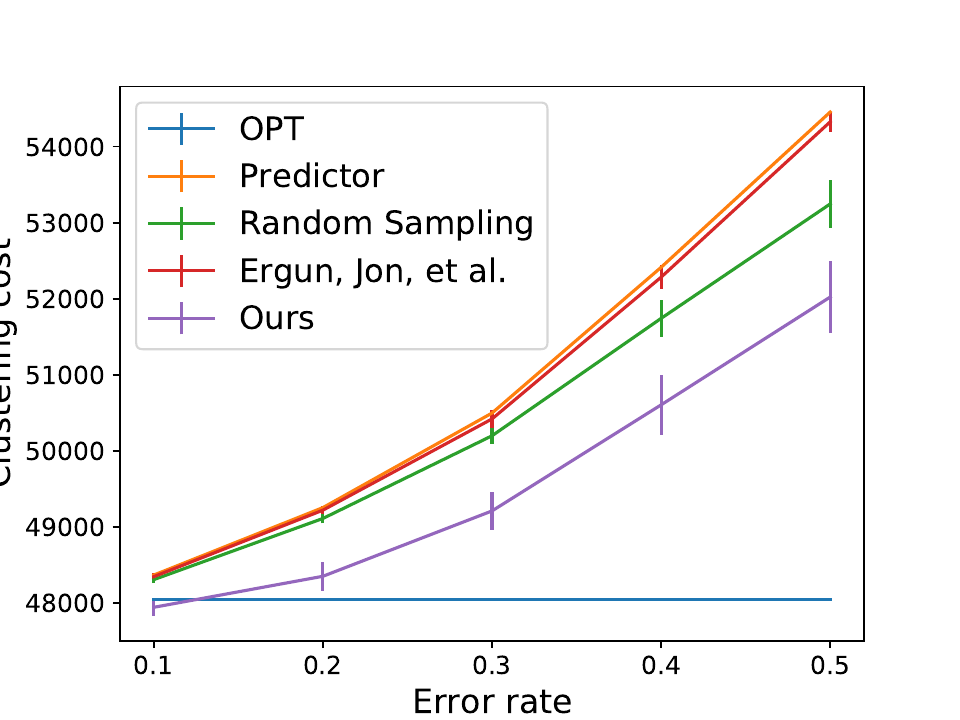}
  \caption{MNIST}
\end{subfigure}\hfill
\begin{subfigure}{.3\textwidth}
  \centering
  \includegraphics[width=\linewidth]{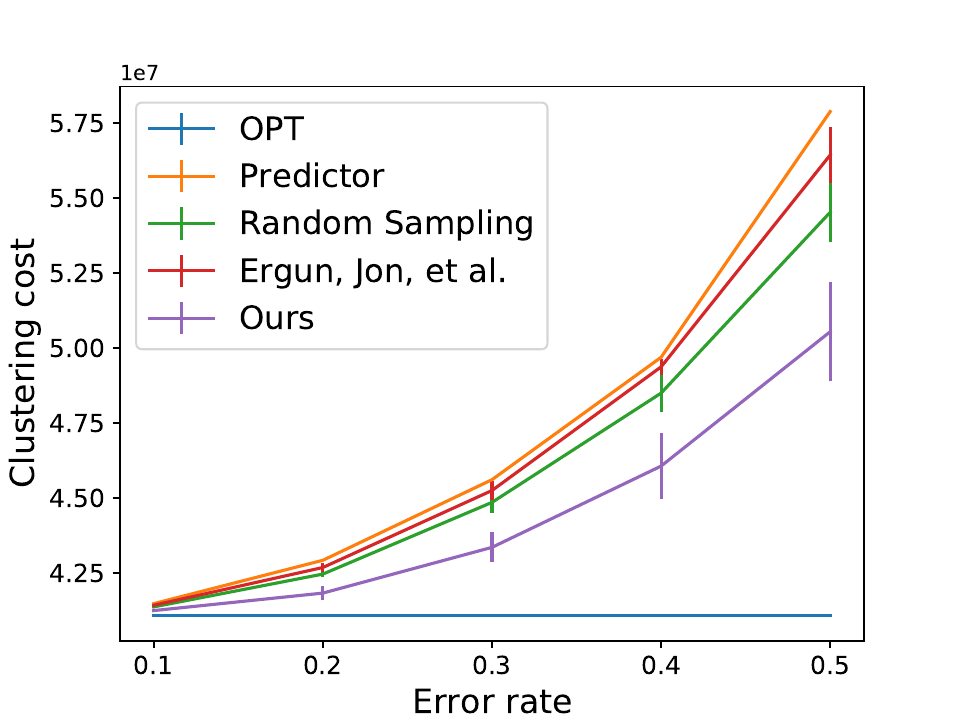}
  \caption{PHY}
\end{subfigure}
\caption{Experimental comparison of \cref{alg:kMeds} with prior work and baselines for $k$-Medians}
\label{kmedians}
\end{figure}
\Cref{kmedians} shows that our our k-medians algorithm has the best clustering cost across all the datasets. We also observe that the sampling approach outperforms the approach of \citet{DBLP:journals/corr/abs-2110-14094} for the CIFAR-10 and the MNIST datasets. This is expected since the latter algorithm sample a random subset of a fixed size in each cluster while the baseline approach samples  subsets of different sizes and uses the one with the best cost.

\bibliographystyle{unsrtnat}
\bibliography{biblio}

\renewcommand{\cost}[2]{\operatorname{cost}(#1,\overline{#2})}
\renewcommand{\costP}[2]{\operatorname{cost}(#1,#2)}

\section{Appendix}

\subsection{Missing proofs for \texorpdfstring{$k$}{k}-Means}

\lambdaMean*

\begin{proof}
     We know $|J_1| = (1-x) n, |J_2| = xn$ for some $x \leq \lambda$. It follows that
     \begin{align*}
        \overline{J} &= (1-x) \overline{J}_1 + x \overline{J}_2 \\
        \Rightarrow | \overline{J} - \overline{J}_1| &= x |\overline{J}_2 - \overline{J}_1|   \\
        \mbox{and } | \overline{J} - \overline{J}_2| &= (1-x) |\overline{J}_2 - \overline{J}_1| \\ 
        \Rightarrow  | \overline{J} - \overline{J}_2| &= \frac{1-x}{x} |\overline{J} - \overline{J}_1| \numberthis{} \label{distJ_J2}.
     \end{align*}
 We now observe that we can write
     \begin{align*}
         \cost{J}{J} &= \cost{J_1}{J} +  \cost{J_2}{J}.
     \end{align*}
 and recall the identity
    \begin{align*}
        \cost{J_b}{J} = \cost{J_b}{J_b} + |J_b|\cdot |\overline{J} - \overline{J}_b |^2
    \end{align*}
for $b\in \{0,1\}$. It then follows that
     \begin{align*}
         \cost{J}{J} &\geq |J_1| \cdot |\overline{J} - \overline{J}_1|^2 + |J_2| \cdot |\overline{J} - \overline{J}_2|^2\\
         &= (1- x) n |\overline{J} - \overline{J}_1|^2 + x n   |\overline{J} - \overline{J}_2|^2 \\
         & = (1- x) n |\overline{J} - \overline{J}_1|^2 + \frac{(1-x)^2 n}{x}   |\overline{J} - \overline{J}_1|^2 \\ 
         & = \frac{(1-x)n }{x} |\overline{J} - \overline{J}_1|^2 \\
         & \geq \frac{(1-\lambda)n }{\lambda} |\overline{J} - \overline{J}_1|^2\\
         \Rightarrow |J-J_1|^2 &\leq \frac{\lambda}{(1-\lambda)n} \cost{J}{J}.
     \end{align*}
 \end{proof}

 \optOneMeans*
 
\begin{proof}
Suppose $I'_{i,j} = \operatorname{argmin}_{Z' \in \omega'_{i,j}} \cost{Z'}{Z'}$. If $I'_{i,j} \in \omega_{i,j}$ then we are done since we know:  
 \begin{align*}
     \cost{I_{i,j}}{I_{i,j}} = \min_{Z \in \omega_{i,j}} \cost{Z}{Z}
 \end{align*}
If $I'_{i,j} \notin \omega_{i,j}$, let $a$ and $b$ be the minimum point and maximum points in $I'_{i,j}$. We know there exists a point $p \in P_{i,j} \cap (a,b) $ such that $p \notin I'_{i,j}$. If $|I'_{i,j}| = 2$, then we have a contradiction since 
\begin{align*}
    \cost{I'_{i,j}}{I'_{i,j}} = (b-a)^2 / 2 > (b-p)^2 / 2  = \cost{\{b,p\}}{\{b,p\}}
\end{align*}
If $|I'_{i,j}| \geq 3$, we know either $a$ or $b$  is the furthest point from $\overline{I'_{i,j} \setminus \{a,b\}}$ in the interval $[a,b]$. Suppose $a$ is such a point, consider $K_{i,j} = (I'_{i,j} \setminus a) \cup p$. We have the following identity, 
\begin{align*}
    \costP{K_{i,j} \setminus p}{p} &= \costP{K_{i,j} \setminus \{p,b\}}{p} + |p-b|^2 
    \\ & = \costP{I'_{i,j} \setminus \{a,b\}}{p} + |p-b|^2  
    \\ & = \cost{I'_{i,j} \setminus \{a,b\}}{I'_{i,j} \setminus {a,b}} + | I'_{i,j} \setminus \{a,b\} | \cdot  |p - \overline{I'_{i,j} \setminus \{a,b\}} | ^2 + |p-b|^2   
    \\ & < \cost{I'_{i,j} \setminus \{a,b\}}{I'_{i,j} \setminus {a,b}} + | I'_{i,j} \setminus \{a,b\} | \cdot |a - \overline{I'_{i,j} \setminus \{a,b\}} | ^2 + |a-b|^2   
    \\ & = \costP{I'_{i,j} \setminus \{a,b\}}{a} + |a-b|^2 
    \\ & = \costP{I'_{i,j} \setminus \{a\}}{a}.
\end{align*}
For the inequality, we used the fact that $a$ is the furthest point from $\overline{I'_{i,j} \setminus \{a,b\}}$ in the interval $[a,b]$, and $q \in (a,b)$. We have, 
\begin{align*}
    \cost{K_{i,j}}{K_{i,j}} &
    =  \frac{1}{(1-\alpha) m_i} \sum_{y_1, y_2 \in K_{i,j} } |y_1 - y_2|^2 \\ 
    & = \frac{1}{(1-\alpha) m_i} \left( \sum_{y_1, y_2 \in K_{i,j} \setminus p } |y_1 - y_2|^2 + \costP{K_{i,j} \setminus p}{p}  \right)
    \\  & = \frac{1}{(1-\alpha) m_i} \left( \sum_{y_1, y_2 \in I'_{i,j} \setminus a } |y_1 - y_2|^2 + \costP{I'_{i,j} \setminus a}{p}  \right)
    \\ & < \frac{1}{(1-\alpha) m_i} \left( \sum_{y_1, y_2 \in I'_{i,j} \setminus a } |y_1 - y_2|^2 + \costP{I'_{i,j} \setminus a}{a}  \right)
    \\ & = \frac{1}{(1-\alpha) m_i} \sum_{y_1, y_2 \in I'_{i,j}} |y_1 - y_2|^2 
   \\  & = \cost{I'_{i,j}}{I'_{i,j}} 
\end{align*}
Hence, $\cost{K_{i,j}}{K_{i,j}} < \cost{I'_{i,j}}{I'_{i,j}} $ and we have a contradiction. 
\end{proof}

\interval*

\begin{proof}
Consider the set $S_{i,j} = \{ (\overline{ Q_{i,j}} - q )^2 : q \in Q_{i,j} \}$. Let  $V_{i,j}$ be a subset of  size $(1-\alpha)m$ drawn uniformly at random from $Q_{i,j}$. Since the sample mean is an unbiased estimator for the population mean, we know 
\begin{align*}
     \frac{1}{(1-\alpha) m_i}\E\left[\sum_{q \in V_{i,j}} (\overline{ Q_{i,j}} - q )^2 \right]   = \overline{S_{i,j}} = \frac{\cost{Q_{i,j}}{Q_{i,j}}}{|Q_i|}.
\end{align*} 
We also know that, 

\begin{align*}
    \E\left[\sum_{q \in V_{i,j}} (\overline{ Q_{i,j}} - q )^2 \right] = \E \left[ \cost{V_{i,j}}{Q_{i,j}} \right]  \geq \E \left[ \cost{V_{i,j}}{V_{i,j}} \right] \geq \cost{I_{i,j}}{I_{i,j}},
\end{align*}
where we used the fact that $I_{i,j}$ is a subset of size $(1-\alpha)|P_i|$ with minimum $1$-means clustering cost (lemma~\ref{lemma:optOneMeans}). Thus, we have

    \begin{align*}
        \cost{I_{i,j}}{I_{i,j}}\le \frac{(1-\alpha) m_i}{|Q_{i}|}\cost{Q_{i,j}}{Q_{i,j}}.
    \end{align*}
    
    Now, in the notation of \cref{lemma:lambdaMean}, we set $J = I_{i,j}$ and $J_1 = I_{i,j}\cap P^*_{i,j}$. Since we have that $|I_{i,j}| = (1-\alpha)m_i$ and $|I_{i,j}\cap P^*_{i,j}| = |I_{i,j}\cap Q_{i,j}| =  (1 - \frac{| P_{i,j} \setminus Q_{i,j}|}{1-m_i}) (1-\alpha)m_i$, we can set $\lambda = \frac{| P_{i,j} \setminus Q_{i,j}|}{1-m_i} $, and get that
    \begin{align*}
    |\overline{I_{i,j}} - \overline{I_{i,j}\cap Q_{i,j}}|^2&\le \frac{|P_{i,j}\setminus Q_{i,j}| \cost{I_{i,j}}{I_{i,j}}}{((1-\alpha)m_i-|P_{i,j}\setminus Q_{i,j}|)(1-\alpha)m_i} \\  & \leq \frac{|P_{i,j}\setminus Q_{i,j}| \cost{Q_{i,j}}{Q_{i,j}} }{((1-\alpha)m_i-|P_{i,j}\setminus Q_{i,j}|) |Q_{i}|} 
    \\ & \leq  \frac{\alpha  \cost{Q_{i,j}}{Q_{i,j}}}{(1-2\alpha)|Q_{i}|},
    \end{align*}
 where we use the fact that $|P_{i,j}\setminus Q_{i,j}| \leq \err m_i$. 
    Also, by \cref{lemma:lambdaMean}, 
    \begin{align*}
        |\overline{Q_{i,j}} - \overline{I_{i,j}\cap Q_{i,j}}|^2&\le \frac{\alpha m_i \cost{Q_{i,j}}{Q_{i,j}}}{(|Q_{i,j}|-\alpha m_i)|Q_{i,j}|} \le  \frac{\alpha  \cost{Q_{i,j}}{Q_{i}}}{(1-2\alpha)|Q_i|}
    \end{align*}
    We conclude the proof by noting that 
    \begin{align*}
        |\overline{I_{i,j}} - \overline{Q_{i,j}}|^2 \le 2 |\overline{I_{i,j}} - \overline{I_{i,j}\cap Q_{i,j}}|^2 + 2 |\overline{I_{i,j}\cap Q_{i,j}} - \overline{Q_{i,j}}|^2.
    \end{align*}
\end{proof}

\distItoMean*

\begin{proof}
 By \cref{distJ_J2},
\begin{align*}
&|\overline{P^*_{i,j}} - \overline{P^*_{i,j} \setminus Q_{i,j}}|^2 = \frac{(1-z)^2}{z^2} |\overline{P^*_{i,j}} - \overline{Q_{i,j}}|^2 , 
\end{align*}
where $z = \frac{|P^*_{i,j}\setminus Q_{i,j}|}{|P^*_{i,j}|}\le \alpha$. We have
\begin{align*}
    & \cost{P^*_{i,j}}{P^*_{i,j}} 
    \\ &= \cost{P^*_{i,j} \setminus Q_{i,j}}{P^*_{i,j}}  + \cost{Q_{i,j}}{P^*_{i,j}} 
    \\ &= \cost{P^*_{i,j} \setminus Q_{i,j}}{P^*_{i,j} \setminus Q_{i,j}} + z m^*_i |\overline{P^*_{i,j}} - \overline{P^*_{i,j} \setminus Q_{i,j}}|^2  + \cost{Q_{i,j}}{Q_{i,j}} \\
    &+ (1-z)m^*_i |\overline{P^*_{i,j}} - \overline{Q_{i,j}}|^2  
    \\ & =  \frac{1-z}{z} m^*_i|\overline{P^*_{i,j}} - \overline{Q_{i,j}}|^2   + \cost{P^*_{i,j} \setminus Q_{i,j}}{P^*_{i,j} \setminus Q_{i,j}}   +  \cost{Q_{i,j}}{Q_{i,j}} 
    \\
    & \geq \frac{1-\err}{\err} m^*_i|\overline{P^*_{i,j}} - \overline{Q_{i,j}}|^2  +  \cost{Q_{i,j}}{Q_{i,j}}. 
\end{align*}
Applying \cref{lemma:interval}, we have 
\[
\cost{P^*_{i,j}}{P^*_{i,j}} \ge \frac{1-\alpha}{\alpha} m^*_i|\overline{P^*_{i,j}} - \overline{Q_{i,j}}|^2 + \frac{1-2\alpha}{4\alpha}\cdot (1-\alpha)m^*_i |\overline{I_{i,j}} - \overline{Q_{i,j}}|^2.
\]
By Cauchy-Schwarz,
\begin{align*}
   & \left(|\overline{P^*_{i,j}} - \overline{Q_{i,j}}|+|\overline{I_{i,j}} - \overline{Q_{i,j}}|\right)^2 
   \\ & \le \left(\frac{\alpha}{1-\alpha} + \frac{4\alpha}{(1-2\alpha)(1-\alpha)}\right)\\
   &\left(  \frac{1-\alpha}{\alpha}   m^*_i|\overline{P^*_{i,j}} - \overline{Q_{i,j}}|^2 + \frac{1-2\alpha}{4\alpha}\cdot (1-\alpha)m^*_i |\overline{I_{i,j}} - \overline{Q_{i,j}}|^2 \right)/m^*_i  
   \\ & \le \cost{P^*_{i,j}}{P^*_{i,j}} \left(\frac{\alpha}{1-\alpha} + \frac{4\alpha}{(1-2\alpha)(1-\alpha)}\right)/m^*_i
\end{align*}
We conclude the proof by the fact that $|\overline{P^*_{i,j}} - \overline{I_{i,j}}|^2 \leq \left(|\overline{P^*_{i,j}} - \overline{Q_{i,j}}|+|\overline{I_{i,j}} - \overline{Q_{i,j}}|\right)^2 $.
\end{proof}

\kmeans*

\begin{proof}
Recall that the $k$-means clustering cost can be written as  the sums of the clustering cost in each dimension. For every $i \in [k]$, we have
\begin{align*}
  \sum_{i \in [k]} \costP{P^*_i}{\{\widehat{c}_j\}_{j=1}^k} &\leq \sum_{i \in [k]}  \costP{P^*_i}{\widehat{c}_i} \\
  &=   \sum_{i \in [k]}  \sum_{j \in [d]}  \costP{P^*_{i,j}}{\widehat{c}_{i,j}} \\ 
  &= \sum_{i \in [k]}  \sum_{j \in [d]}  \cost{P^*_{i,j}}{P^*_{i,j}} + m_i^* | \widehat{c}_{i,j} - \overline{P^*_{i,j}}|
    \\ &= \sum_{i \in [k]}  \sum_{j \in [d]}  \cost{P^*_{i,j}}{P^*_{i,j}} + m_i^* | \overline{I_{i,j}}  - \overline{P^*_{i,j}}|
    \\ & \le  \sum_{i \in [k]}  \sum_{j \in [d]}  
\left ( 1 + \left(\frac{\alpha}{1-\alpha} + \frac{4\alpha}{(1-2\alpha)(1-\alpha)}\right) \right) \cost{P^*_{i,j}}{P^*_{i,j}} 
\\ & = \left( 1 + \left(\frac{\alpha}{1-\alpha} + \frac{4\alpha}{(1-2\alpha)(1-\alpha)}\right) \right) \sum_{i \in [k]}  \costP{P^*_i}{c^*_i}.
\end{align*}
The inequality is due to lemma~\ref{lemma:distItoMean}.

We analyze the runtime of algorithm~\ref{alg:detAlg}. Notice for every $i \in [k], j \in [d]$, computing $\overline{I_{i,j}}$ involves sorting the points $P_{i,j}$, iterating from the smallest to the largest point, and taking the average of the interval in $\omega_{i,j}$ with the smallest cost. This takes $O\left( m_i \log m_i\right)$ time. Note that $\sum_{i \in [K]} m_i = m$. Thus, the total time over all  $i \in [k]$  and $j \in [d]$ is $O\left(d m \log m\right).$
\end{proof}

\comparison*

\begin{proof}
    We recall that the generic guarantee for $\alpha < 1/2$ is 
    \[ \costP{P}{\{\widehat{c}_1,\dots, \widehat{c}_k \}} \leq \left(1+ \left(\frac{\alpha}{1-\alpha} + \frac{4\alpha}{(1-2\alpha)(1-\alpha)}\right)\right)\OPT. \]
    We see that for $\alpha<1/7$, $\frac{\alpha}{1-\alpha}\leq \frac{7\alpha}{6}$, and $\frac{4\alpha}{(1-2\alpha)(1-\alpha)}\leq \frac{49\cdot 4 \alpha}{30}$, so in sum the net approximation factor is $1+7.7\alpha$.
\end{proof}

\subsection{Missing proofs for \texorpdfstring{$k$}{k}-Medians}
\renewcommand{\cost}[0]{\mathrm{cost}}

\distOfx*

\begin{proof}
We observe that $\cost(P_{i}\cap P_{i}^{*},c_{i}^{*})\leq\OPT_{i}.$
It follows that $\Ex_{x\sim P_{i}\cap P_{i}^{*}}[\|x-c_{i}^{*}\|]\leq\frac{\OPT_{i}}{|P_{i}\cap P_{i}^{*}|}\leq\frac{\OPT_{i}}{(1-\alpha)m_{i}}.$
By Markov's inequality,

\begin{align*}
\Pr\left(\|x-c_{i}^{*}\|>(1+\epsilon)\cdot\frac{\OPT_{i}}{(1-\alpha)m_{i}}\big|x\in P_{i}\cap P_{i}^{*}\right) & \leq\frac{1}{1+\epsilon}\\
\Rightarrow\frac{\Pr\left(\|x-c_{i}^{*}\|\leq\frac{(1+\epsilon)\OPT_{i}}{(1-\alpha)m_{i}}\wedge x\in P_{i}\cap P_{i}^{*}\right)}{P(x\in P_{i}\cap P_{i}^{*})} & \geq\frac{\epsilon}{1+\epsilon}\\
\Pr\left(\|x-c_{i}^{*}\|\leq\frac{(1+\epsilon)\OPT_{i}}{(1-\alpha)m_{i}}\right) & \geq\frac{\epsilon}{1+\epsilon}P(x\in P_{i}\cap P_{i}^{*})\\
 & \geq\frac{\epsilon\left(1-\alpha\right)}{1+\epsilon}
\end{align*}

To get the stated bound we set $\epsilon=1-2\alpha$.
\end{proof}

\mainCostBound*

We first define some notation for the sets of false positive and false negative points that occur in our proof for \cref{lem:mainCostBound}, and prove a technical lemma relating the sets $P_i\cap P_i^*$ and $P_i\backslash P'_i$.

\begin{definition}
\label{def:simplify}We make the following definitions:
    \begin{enumerate}
        \item Let $\Ev_{1}$ denote the event that $\|x-c_{i}^{*}\|\leq2\OPT_{i}/n$.
        \item Let $A$ denote the set of false negatives, i.e. $P_{i}^{*}\cap P_{i}'$.
        \item Let $B$ denote the set of false positives, i.e. $P_{i}\backslash(P_{i}'\cup P_{i}^{*})$.
    \end{enumerate}
\end{definition}

To bound the clustering cost of $P_i \backslash P'_i$, in terms of the cost of $P_i \cap P'_i$, we first relate these two sets in terms of the false positives $B$ and the false negatives $A$.

\begin{restatable}{lemma}{trueVsClipped}
\label{lem:trueVsClipped}We can write $\text{\ensuremath{P_{i}\cap P_{i}^{*}}}=((P_{i}\backslash P_{i}')\backslash B)\cup A$ (see \cref{def:simplify} for the definitions of $A$ and $B$).
\end{restatable}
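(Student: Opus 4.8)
\textbf{Proof proposal for \cref{lem:trueVsClipped}.}

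The plan is to verify the set identity $P_i \cap P_i^* = ((P_i \setminus P_i') \setminus B) \cup A$ by a straightforward partition argument on $P_i$. First I would observe that $P_i'$ is by definition a subset of $P_i$, so $P_i$ decomposes as the disjoint union $P_i = (P_i \setminus P_i') \cup P_i'$. Intersecting with $P_i^*$ gives $P_i \cap P_i^* = ((P_i \setminus P_i') \cap P_i^*) \cup (P_i' \cap P_i^*)$, and the second piece is exactly $A = P_i^* \cap P_i'$ by \cref{def:simplify}. So it remains to show $(P_i \setminus P_i') \cap P_i^* = (P_i \setminus P_i') \setminus B$.

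For that equality, I would work inside the ground set $S := P_i \setminus P_i'$. Since $B = P_i \setminus (P_i' \cup P_i^*) = (P_i \setminus P_i') \setminus P_i^* = S \setminus P_i^*$, the set $B$ is precisely the part of $S$ lying outside $P_i^*$; hence $S \setminus B = S \setminus (S \setminus P_i^*) = S \cap P_i^*$, which is $(P_i \setminus P_i') \cap P_i^*$. Combining the two displays yields $P_i \cap P_i^* = ((P_i \setminus P_i') \setminus B) \cup A$, and I would also note the union is disjoint (since $A \subseteq P_i'$ while the other term is disjoint from $P_i'$), which is convenient for the subsequent cost bookkeeping even though it is not strictly asserted in the statement.

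There is essentially no obstacle here — the only thing to be slightly careful about is that $B$ is defined relative to $P_i$ (not relative to all of $P$), so that $B \subseteq P_i \setminus P_i'$ holds and the complementation step $S \setminus B = S \cap P_i^*$ is valid; I would spell that containment out explicitly. The whole argument is three or four lines of elementary set algebra.
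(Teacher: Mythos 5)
Your proof is correct and follows essentially the same elementary set-algebra route as the paper's: both hinge on the identities $(P_i\setminus P_i')\cap P_i^* = (P_i\setminus P_i')\setminus B$ and $P_i'\cap P_i^* = A$, differing only in that you partition $P_i$ by $P_i'$ first and then intersect with $P_i^*$, while the paper decomposes $P_i\setminus P_i'$ and $P_i\cap P_i^*$ separately and identifies the common piece. Your added remark that the union is disjoint is a harmless (and in fact useful) bonus.
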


\begin{proof}
To see this we observe that
\begin{align*}
P_{i}\backslash P_{i}' & =((P_{i}\backslash P_{i}')\cap P_{i}^{*})\cup((P_{i}\backslash P_{i}')\backslash P_{i}^{*})\\
 & =((P_{i}\backslash P_{i}')\cap P_{i}^{*})\cup B\\
\Rightarrow(P_{i}\backslash P_{i}')\cap P_{i}^{*} & =(P_{i}\backslash P_{i}')\backslash B.
\end{align*}

We also have that
\begin{align*}
P_{i}\cap P_{i}^{*} & =((P_{i}\cap P_{i}^{*})\cap P_{i}')\cup((P_{i}\cap P_{i}^{*})\backslash P_{i}')\\
\Rightarrow((P_{i}\cap P_{i}^{*})\backslash P_{i}') & =(P_{i}\cap P_{i}^{*})\backslash((P_{i}\cap P_{i}^{*})\cap P_{i}')\\
 & =(P_{i}\cap P_{i}^{*})\backslash A.
\end{align*}

Since $(P_{i}\cap P_{i}^{*})\backslash P_{i}'=(P_{i}\backslash P_{i}')\cap P_{i}^{*}$,
we can identify the left hand sides in the last two displays and write

\begin{align*}
(P_{i}\cap P_{i}^{*})\backslash A & =(P_{i}\backslash P_{i}')\backslash B\\
\Rightarrow P_{i}\cap P_{i}^{*} & =((P_{i}\backslash P_{i}')\backslash B)\cup A.
\end{align*}

wherein we use that $A=(P_{i}\cap P_{i}^{*})\cap P_{i}'$.
\end{proof}

We can now formalize our main argument showing that the clipped data set $P_i \backslash P'_i$ has a clustering cost close to that of the true cluster $P_i^*$.

\begin{proof}[Proof of \cref{lem:mainCostBound}]
By \cref{lem:trueVsClipped}, we first observe that
\begin{align*}
\cost(P_{i}\setminus P_{i}',c_{i}^{*}) & =\cost((P_{i}\cap P_{i}^{*}),c_{i}^{*})-\cost(A,c_{i}^{*})+\cost(B,c_{i}^{*}),
\end{align*}
where $A$ and $B$ are defined as in \cref{def:simplify}. Again by \cref{lem:trueVsClipped}, $P_{i}\setminus P_{i}'=(P_{i}\cap P_{i}^{*})\setminus A\cup B$,
$A\subset P_{i}\cap P_{i}^{*}$ and $B\cap(P_{i}\cap P_{i}^{*})=\emptyset$,
it follows that 
\begin{align*}
|P_{i}\setminus P_{i}'| & =|P_{i}\cap P_{i}^{*}|-|A|+|B|.
\end{align*}
Further, we know that $|P_{i}\setminus P_{i}'|\leq(1-\alpha)|P_{i}|$
and $|P_{i}\cap P_{i}^{*}|\geq(1-\alpha)|P_{i}|$. It follows that
$|B|\leq|A|\le\alpha n$. Therefore, for every false positive $p\in B$,
we can assign a unique corresponding false negative $n_{p}\in A$
arbitrarily. We observe that every point in $A$ is farther from $x$
than every point in $B$, and so we can write
\begin{align*}
\|n_{p}-c_{i}^{*}\|\text{\ensuremath{\geq}} & \|n_{p}-x\|-\|x-c_{i}^{*}\|\\
\geq & \|p-x\|-\|x-c_{i}^{*}\|\\
\geq & \|p-c_{i}^{*}\|-2\|x-c_{i}^{*}\|\\
\geq & \|p-c_{i}^{*}\|-\frac{4\OPT_{i}}{n}\\
\Rightarrow\|p-c_{i}^{*}\|\leq & \|n_{p}-c_{i}^{*}\|+\frac{4\OPT_{i}}{n}.
\end{align*}

It follows that
\begin{align*}
\cost(B,c_{i}^{*}) & =\sum_{p\in B}\|p-c_{i}^{*}\|\\
 & \leq\sum_{p\in B}\|n_{p}-c_{i}^{*}\|+\frac{4\OPT_{i}}{n}\\
 & \leq\cost(A,c_{i}^{*})+4\alpha\OPT_{i}.
\end{align*}

Returning to our expression for $\cost(P_{i}\setminus P_{i}',m_{i}^{*})$,
we get that
\begin{align*}
\cost(P_{i}\setminus P_{i}',c_{i}^{*}) & =\cost((P_{i}\cap P_{i}^{*})\setminus A,c_{i}^{*})+\cost(B,c_{i}^{*})\\
 & =\cost((P_{i}\cap P_{i}^{*}),c_{i}^{*})-\cost(A,c_{i}^{*})+\cost(B,c_{i}^{*})\\
 & \leq\cost((P_{i}\cap P_{i}^{*}),c_{i}^{*})+4\alpha\OPT_{i}\\
 & \leq(1+4\alpha)\OPT_{i}.
\end{align*}

It follows that the optimal clustering cost for the set $P_{i}\setminus P_{i}'$
is at most $(1+4\alpha)\OPT_{i}$, and hence that $\cost(P_{i}\setminus P_{i}',\widehat{c}_i^j)\leq(1+\gamma)(1+4\alpha)\OPT_{i}\leq(1+5\alpha)\OPT_{i}$,
for suitably small $\gamma\le\frac{\alpha}{1+4\alpha}$.
\end{proof}

\estimateCost*

\begin{proof}
The probability $\Ev_{1}$ not holding for some $\widehat{c}_i^j$
is at most $\left(1-2\alpha\right)/2$. The probability of $\Ev_{1}$
not holding for any of the $\widehat{c}_i^j$ is $(1-\left(1-2\alpha\right)/2)^{R}$.
It follows that for $R=\frac{2}{1-2\alpha}\ln\left(\frac{2k}{\delta}\right)$,
the probability of $\Ev_{1}$ not holding for any of the $m_{i}^{j}$
is at most

\begin{align*}
(1-(1-2\alpha)/2)^{R} & \leq\exp(-(1-2\alpha)/2)^{R}\\
 & \leq\exp\left(-\ln\left(2k/\delta\right)\right)\\
 & \leq\frac{\delta}{2k}.
\end{align*}

It follows that with probability $1-\frac{\delta}{2k}$, $\Ev_{1}$
holds for some $\widehat{c}_i^j$ and consequently by the union
bound $\cost(P_{i}\setminus P_{i}',\widehat{c_{i}})\leq(1+5\alpha)\OPT_{i}$
holds with probability $1-\frac{\delta}{k}$.
\end{proof}

\distBound*

\begin{proof}
By the reverse triangle inequality we have that for every point $p\in P_{i}^{*}\cap(P_{i}\setminus P_{i}')$,
$\|\widehat{c_{i}}-p\|\geq\|\widehat{c_{i}}-c_{i}^{*}\|-\|p-c_{i}^{*}\|$.
Summing up across p, we get
\begin{align*}
\sum_{p\in P_{i}^{*}\cap(P_{i}\setminus P_{i}')}\|\widehat{c_{i}}-p\| & \geq|P_{i}^{*}\cap(P_{i}\setminus P_{i}')|\cdot\|\widehat{c_{i}}-c_{i}^{*}\|-\sum_{p\in P_{i}^{*}\cap(P_{i}\setminus P_{i}')}\|p-c_{i}^{*}\|\\
(1+5\alpha)\OPT_{i} & \geq|P_{i}^{*}\cap(P_{i}\setminus P_{i}')|\cdot\|\widehat{c_{i}}-c_{i}^{*}\|-\OPT_{i}\\
\Rightarrow|P_{i}^{*}\cap(P_{i}\setminus P_{i}')|\cdot\|\widehat{c_{i}}-c_{i}^{*}\| & \leq((1+5\alpha)+1)\OPT_{i}\\
\Rightarrow\|\widehat{c_{i}}-c_{i}^{*}\| & \leq\frac{(2+5\alpha)\OPT_{i}}{(1-2\alpha)m_{i}}.
\end{align*}
\end{proof}

\truePosCost*

\begin{proof}
From \cref{cor:1-median}, we know that with probability $1-\frac{\delta}{2k}$,
the following bound holds: 
\begin{align*}
\cost(P_{i}\setminus P_{i}',\widehat{c_{i}}) & \le(1+\gamma)\cost(P_{i}\setminus P_{i}',c_{i}'),
\end{align*}
where $\gamma\leq\frac{\alpha}{(1+4\alpha)}$ and $c_{i}'$ is an
optimal $1$-median for $P_{i}\backslash P_{i}'$. Also, it follows
by definition that $\cost(P_{i}\backslash P_{i}',c_{i}')\leq\cost(P_{i}\backslash P_{i}',c_{i}^{*})$.
Further, from \cref{lem:estimateCost} and \cref{lem:distBound} it
follows that with probability $1-\frac{\delta}{2k}$,
\[
\left\Vert \widehat{c_{i}}-c_{i}^{*}\right\Vert \leq\frac{(2+5\alpha)\OPT_{i}}{(1-2\alpha)m_{i}}.
\]
By the union bound, both these events hold simultaneously with probability
$1-\frac{\delta}{k}$. Conditioning on this being the case, since
$\text{\ensuremath{P_{i}\cap P_{i}^{*}}}=((P_{i}\backslash P_{i}')\backslash B)\cup A$,
we can write
\begin{align*}
\cost(P_{i}\cap P_{i}^{*},\widehat{c_{i}})-\cost(P_{i}\cap P_{i}^{*},c_{i}^{*}) & =\left(\cost(P_{i}\setminus P_{i}',\widehat{c_{i}})-\cost(P_{i}\setminus P_{i}',c_{i}^{*})\right) \\
&+\left(\cost(B,c_{i}^{*})-\cost(B,\widehat{c_{i}})\right)\\
 & +\left(\cost(A,\widehat{c_{i}})-\cost(A,c_{i}^{*})\right)\\
 & \leq(1+\gamma)\cost(P_{i}\setminus P_{i}',c_{i}')-\cost(P_{i}\backslash P_{i}',c_{i}')\\
 &+\left|B\right|\cdot\left\Vert \widehat{c_{i}}-c_{i}^{*}\right\Vert +\left|A\right|\cdot\left\Vert \widehat{c_{i}}-c_{i}^{*}\right\Vert \\
 & \leq\gamma\cdot\cost(P_{i}\backslash P_{i}',c_{i}^{*})+\left|B\right|\cdot\left\Vert \widehat{c_{i}}-c_{i}^{*}\right\Vert +\left|A\right|\cdot\left\Vert \widehat{c_{i}}-c_{i}^{*}\right\Vert \\
 & \leq\alpha\OPT_{i}+(\alpha m_{i}+\alpha m_{i})\cdot\frac{(2+5\alpha)\OPT_{i}}{(1-2\alpha)m_{i}}\\
 & \leq\frac{\alpha+2\alpha(2+5\alpha)\OPT_{i}}{(1-2\alpha)}\\
 & =\frac{\left(5\alpha+10\alpha^{2}\right)\OPT_{i}}{1-2\alpha}.
\end{align*}
\end{proof}

\augOneMedianCost*

\begin{proof}
We have that
\[
\cost(P_{i}^{*},\widehat{c_{i}})=\cost(P_{i}^{*}\cap P_{i},\widehat{c_{i}})+\cost(P_{i}^{*}\backslash P_{i},\widehat{c_{i}}).
\]
We bound the second summand as follows
\begin{align*}
\cost(P_{i}^{*}\backslash P_{i},\widehat{c_{i}}) & =\cost(P_{i}^{*}\backslash P_{i},c_{i}^{*})+|P_{i}^{*}\backslash P_{i}|\cdot\frac{(2+5\alpha) \OPT_i }{(1-2\alpha)c_{i}}\\
 & \leq\cost(P_{i}^{*}\backslash P_{i},c_{i}^{*})+\frac{\alpha(2+5\alpha)\OPT_{i}}{\left(1-\alpha\right)\left(1-2\alpha\right)}.
\end{align*}
Bounding the first summand $\cost(P_{i}^{*}\cap P_{i},\widehat{c_{i}})$
using the bound from above, we get
\begin{align*}
\cost(P_{i}^{*},\widehat{c_{i}}) & =\cost(P_{i}^{*}\cap P_{i},c_{i}^{*})+\frac{\left(5\alpha+10\alpha^{2}\right)\OPT_{i}}{1-2\alpha}\\
 & +\cost(P_{i}^{*}\backslash P_{i},c_{i}^{*})+\frac{\alpha(2+5\alpha)\OPT_{i}}{\left(1-\alpha\right)\left(1-2\alpha\right)}\\
 & =\cost(P_{i}^{*},c_{i}^{*})+\frac{\left(5\alpha+10\alpha^{2}-5\alpha^{2}-10\alpha^{3}+2\alpha+5\alpha^{2}\right)\OPT_{i}}{(1-\alpha)(1-2\alpha)}\\
 & =\cost(P_{i}^{*},c_{i}^{*})+\frac{\left(7\alpha+10\alpha^{2}-10\alpha^{3}\right)\OPT_{i}}{(1-\alpha)(1-2\alpha)}.
\end{align*}
\end{proof}

\section{Experiments on runtime}

In this section, we report the runtimes of our $k$-means and $k$-medians approaches and the methods in \cite{DBLP:journals/corr/abs-2110-14094}. We sample subsets of points from the \textbf{CIFAR-10} and the \textbf{PHY} datasets, and report the runtime (means and standard deviations) of the algorithms over 20 random runs. The subset sizes are varied from $20\%$ to $100\%$ of the size of the datasets, $k$ is fixed at $10$ and $\alpha$ is fixed at $.2$

For k-means, we observe in \cref{kmeansTime} that the runtime of the two approaches are  comparable, except for subset sizes $80\%$ and $100\%$  of CIFAR-10 where ours is slightly slower.  This is expected since finding a subset of size $(1-\alpha)m_i$ with the best clustering cost in our algorithm and computing  the shortest interval containing $m_i (1-5\alpha) / 2 $ points  in the approach of \cite{DBLP:journals/corr/abs-2110-14094} both involve sorting the points and takes $O(m_i \log m_i)$ time.  

We observe similar trends in the k-medians setting in \cref{kmediansTime}. This is also expected given that the runtimes of both algorithms are dominated by calls to compute the 1-median center of the filtered points in each predicted cluster.

\begin{figure}
\centering
\begin{subfigure}{.33\textwidth}
  \centering
  \includegraphics[width=\linewidth]{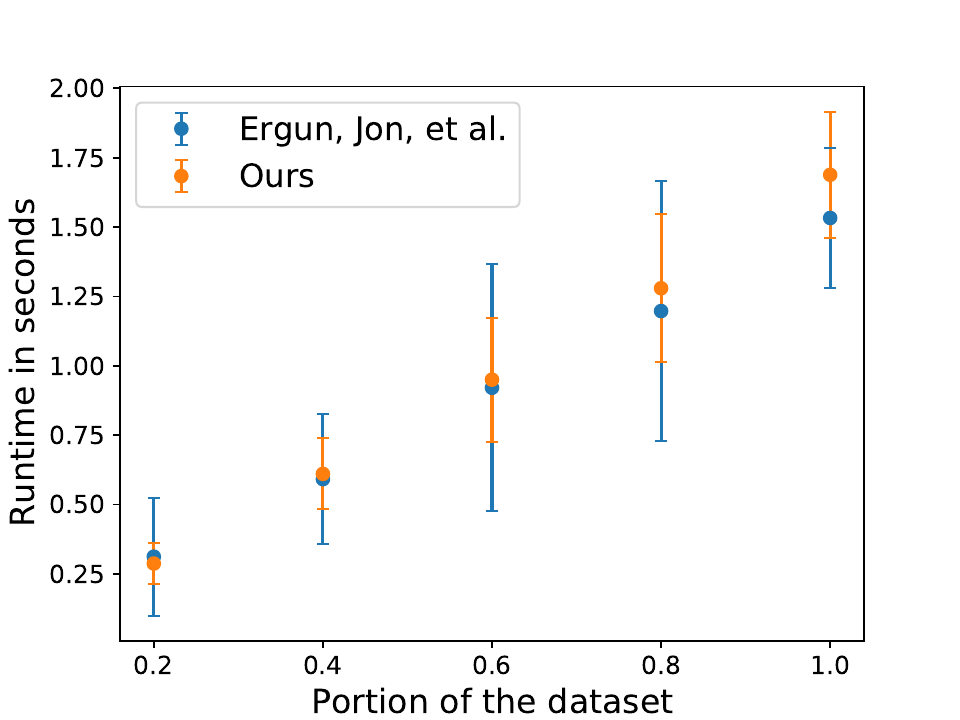}
  \caption{PHY}
\end{subfigure}
\begin{subfigure}{.33\textwidth}
  \centering
  \includegraphics[width=\linewidth]{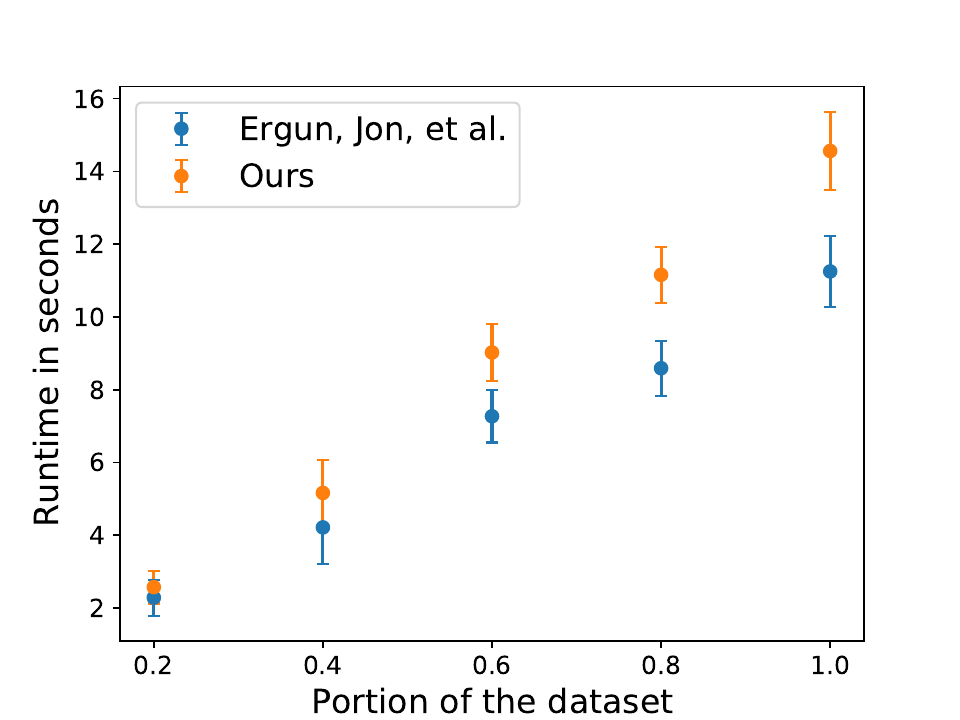}
  \caption{CIFAR-10}
  
\end{subfigure}
\caption{Runtime comparison of \cref{alg:detAlg} with \cite{DBLP:journals/corr/abs-2110-14094} }
\label{kmeansTime}
\end{figure}

\begin{figure}
\centering
\begin{subfigure}{.33\textwidth}
  \centering
  \includegraphics[width=\linewidth]{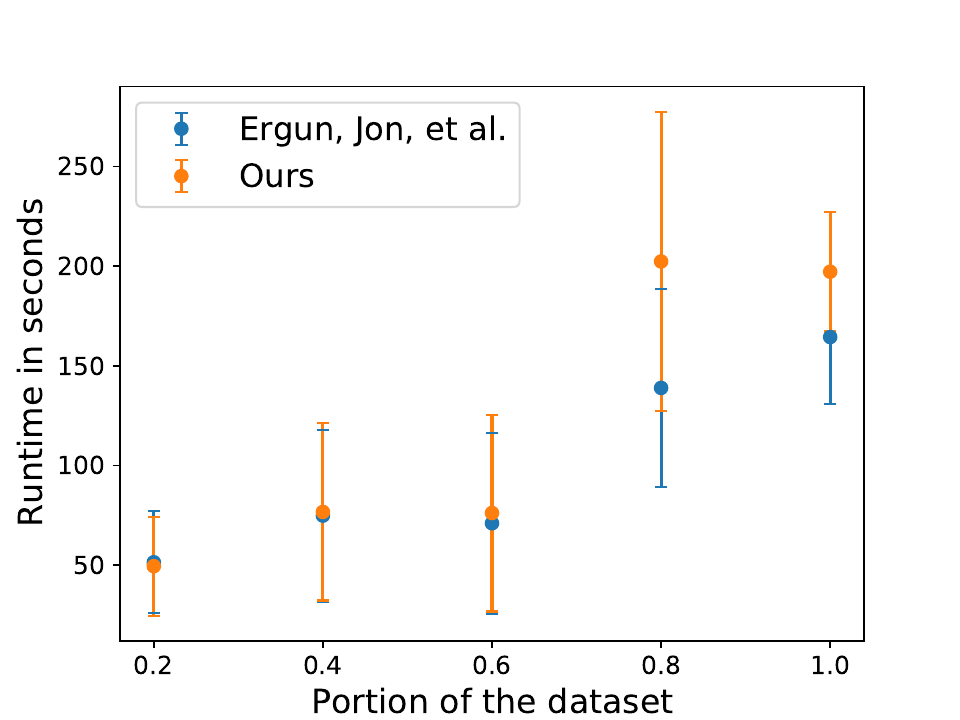}
  \caption{PHY}
\end{subfigure}
\begin{subfigure}{.33\textwidth}
  \centering
  \includegraphics[width=\linewidth]{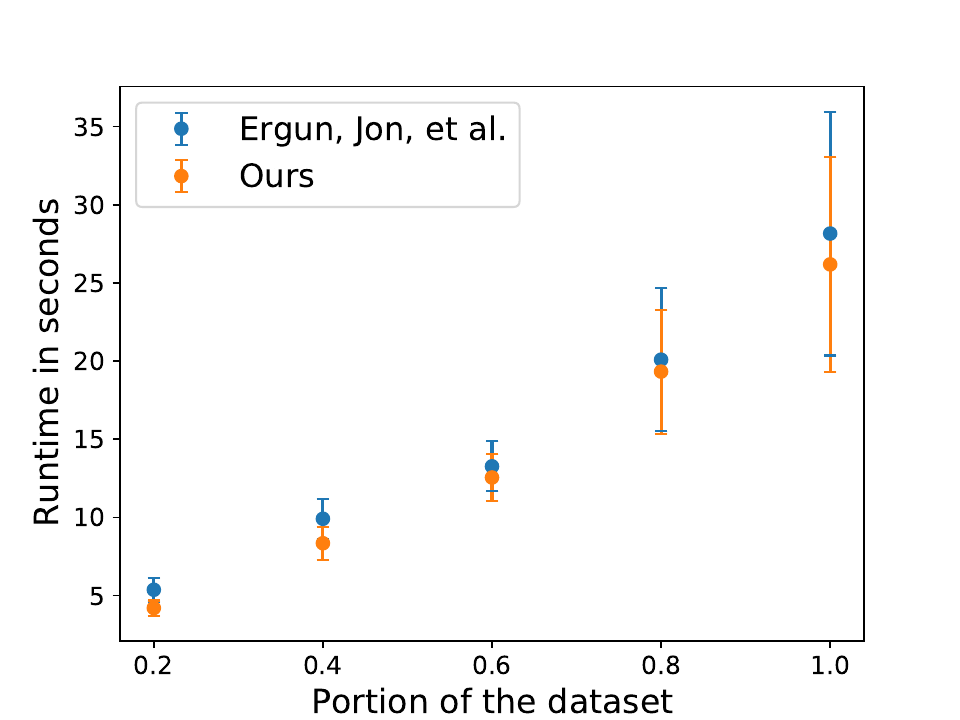}
  \caption{CIFAR-10}
  
\end{subfigure}
\caption{Runtime comparison of \cref{alg:kMeds} with \cite{DBLP:journals/corr/abs-2110-14094} }
\label{kmediansTime}
\end{figure}
\end{document}